\newcommand{\cN}{\mathcal{N}}
\newcommand{\cS}{\mathcal{S}}
\newcommand{\cG}{\mathcal{G}}
\newcommand{\cM}{\mathcal{M}}
\newcommand{\R}{\mathbb{R}}
\newcommand{\N}{\mathbb{N}}
\newcommand{\bX}{\mathbf{X}}
\newcommand{\bt}{\mathbf{t}}
\newcommand{\by}{\mathbf{y}}
\newcommand{\ba}{\mathbf{a}}
\newcommand{\bb}{\mathbf{b}}
\newcommand{\bx}{\mathbf{x}}
\newcommand{\bw}{\mathbf{w}}
\newcommand{\bu}{\mathbf{u}}
\newcommand{\be}{\mathbf{e}}
\newcommand{\bB}{\mathbf{B}}
\newcommand{\tr}[1]{#1}
\newtheorem{theorem}{Theorem}
\newtheorem{lemma}[theorem]{Lemma}
\DeclareMathOperator*{\argmin}{arg\,min}
\DeclareMathOperator*{\diag}{diag}
\DeclareMathOperator*{\rank}{rank}
\DeclareMathOperator*{\minimize}{minimize\ \ }
\DeclareMathOperator*{\subjectto}{subject\ to\ \ }
\title{Modeling longitudinal data using matrix completion}
\author{{\L ukasz} {Kidzi\'nski}\\Stanford University
%\affil{Department of Bioengineering, Stanford University, Stanford, California, USA \email{lukasz.kidzinski@stanford.edu}}
\And Trevor Hastie\\Stanford University}
\email{lukasz.kidzinski@stanford.edu}\\
\begin{document}

\section{Motivation}\label{s:motivation}

The key question in medical practice and clinical research is how diseases progress in individual patients. Accurate continuous monitoring of patient's condition could considerably improve prevention and treatment. Many medical tests, such as X-ray, \tr{magnetic resonance imaging (MRI)}, motion capture gait analysis, biopsy, or blood tests, cannot be taken routinely, since they are costly, harmful or inconvenient. Therefore, practitioners and researchers need to reach out for statistical tools to analyze progression of patient's condition with only sparse and noisy observations at hand.
% In practice, due to the inherent inconvenience and costs of these exams, measurements are taken sparsely in time, tolerating potential complications between the visits.

%While in many situations multiple data points from patients' histories are available, these data are often underutilized. For the sake of simplicity and convenience, many prognostic models applied in practice only use the last few observations from a series or summary statistics such as the mean over time. However, this simplification ignores important information about the progression, including its dynamics or individual patient's variability. Moreover, the noise inherent to measurements further hinders detection of changes in a patient's health. For making use of these data, practitioners need statistical models and software. To enable appropriate usage and broad adoption these tools should be simple to use and interpret. \tr{In Figure \ref{fig:motivation} we present measurements of gait pathology progression to illustrate the setting and the benefits of temporal models in these scenarios.}

%This motivates research on the longitudinal statistical analysis of progression, recovering trajectories from sparse observations.

For illustration, consider longitudinal measurements of the gait deviation index (GDI) which is a holistic measure of motor impairment in children. GDI is measured using advanced motion capture hardware and software (Figure \ref{fig:motivation}). Due to high costs, such measurements are taken only a few times in patient's life. By looking at individual processes and by modeling between-subject similarities, we can model the individual progressions despite having access to only a few observations. Such modeling can yield personalized predictions, clustering of patients based on progression patterns, or latent representation of progression patterns, which then could be used as covariates in other predictive models. 

\begin{figure}[h]
 \includegraphics[width=0.49\linewidth]{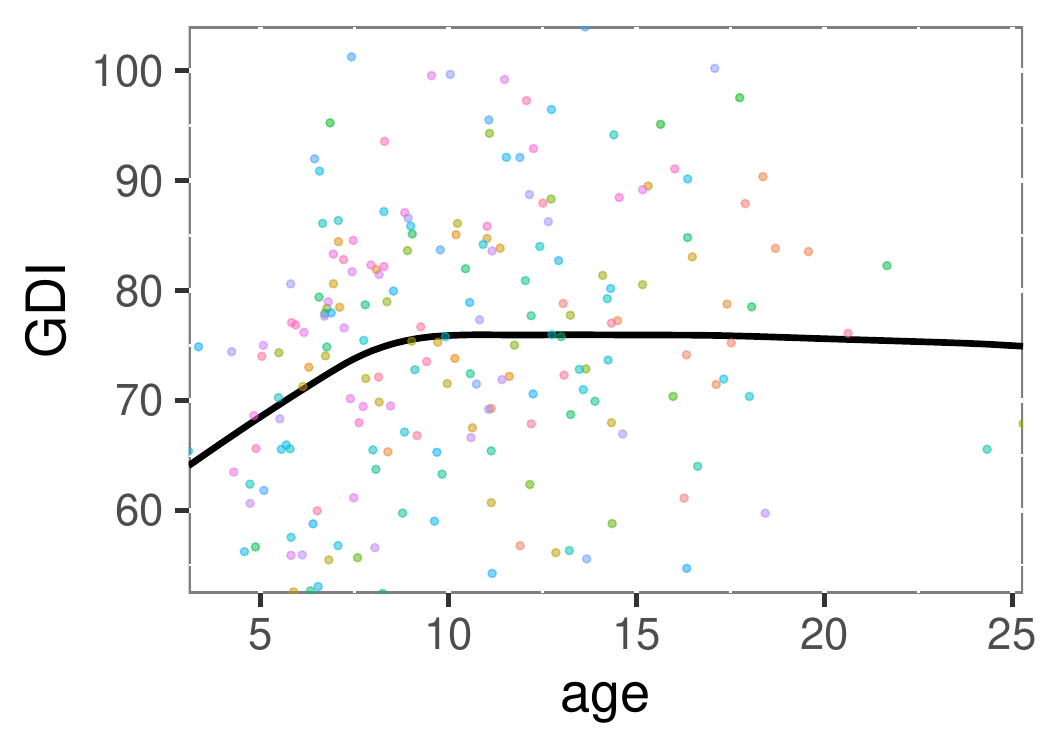}
 \includegraphics[width=0.49\linewidth]{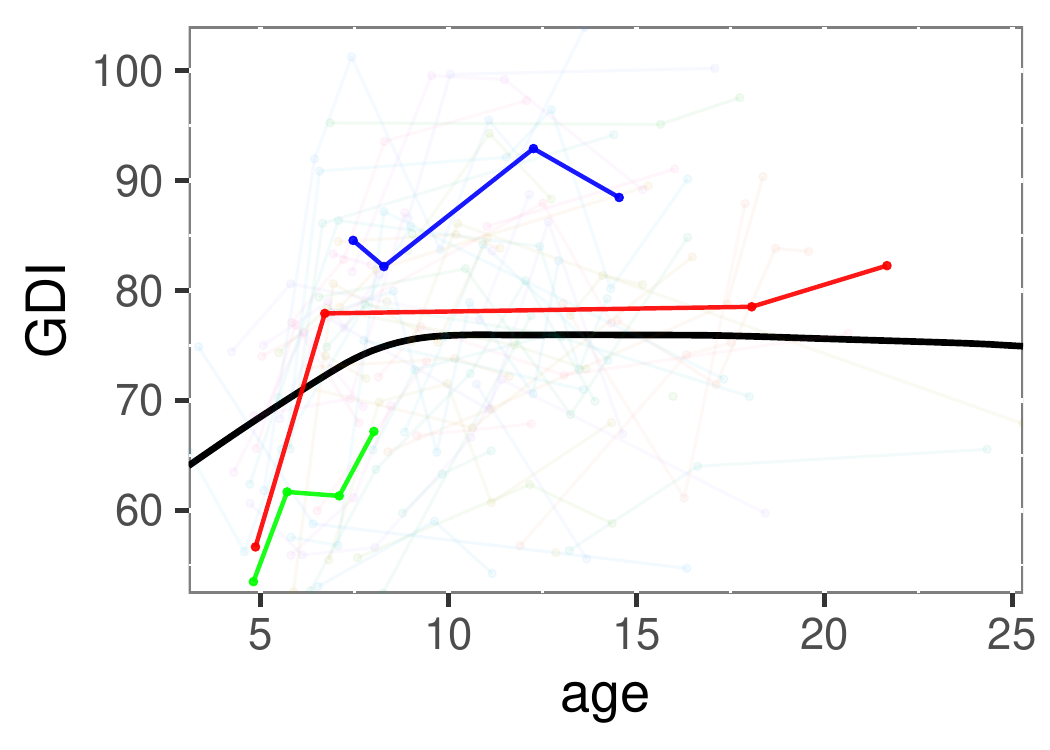}
 \caption{Left plot: We observe the gait deviation index (GDI), a holistic metric of motor impairment, at every visit in a clinic and we derive the population progression (the thick solid curve) using a locally weighted scatterplot smoothing method. Right plot: The analysis of individual patients (connected dots in the right plot in different colors) reveals patterns in individual trends. We highlight 3 randomly selected subjects in red, green, and blue.}
 \label{fig:motivation}
\end{figure}

These kinds of data have been commonly analyzed using linear mixed models, where we treat time as a random effect and nonlinearity of this effect is imposed by the choice of the functional basis \citep{zeger1988models, verbeke1997linear, mcculloch2001generalized}. When data is very sparse, additional constraints on the covariance structure of trajectories are inferred using cross-validation or information criteria \citep{rice2001nonparametric,bigelow2009bayesian}. To further reduce the dimension, practitioners model the covariance as a low-rank matrix \citep{james2000principal,berkey1983longitudinal, yan2017dynamic, hall2006properties, besse1986principal, yao2006penalized, greven2011longitudinal}. Many models have been developed for incorporating additional covariates \citep{song2002semiparametric, liu2009joint, rizopoulos2014combining}. While these methods are used in practice, they tend to be slow and require fine-tuning for each clinical application. Moreover, the probabilistic formulation of the model and dependence on the underlying distributions might hinder applicability or adaption to other practical problems.

In this work, we propose a flexible and efficient framework, using matrix factorization techniques. We focus on the simplicity of the formulation and we implement software which is fast and easy to use.

The manuscript is organized as follows. In Section \ref{s:background}, we introduce the notation of the functional representation of the problem and we discuss optimization techniques. Next, in Section \ref{s:context}, we introduce matrix representation of the problem and show how it maps to the functional representation. We develop our main results and conclude with its links to other results. In Section \ref{s:multivariate}, we extend our framework to the multivariate case. We show example uses of our methods in a simulation study (Section \ref{s:simulation}) and in a data study (Section \ref{s:data-study}). We conclude with a discussion section (Section \ref{s:discussion}).

\section{Background and related work} \label{s:background}

%In many clinical settings, researchers and practitioners model the patient's history as a multivariate process of clinical measurements. Examples include variables such as the size of a tumor, blood markers, height and weight of a patient. The multivariate measurements are noisy, and they are observed on different time-points. This situation arises when, for example, clinicians perform a blood test at every visit but a biopsy sporadically. 

%Even though multiple variables are measured, the ones directly related to the disease, such as the size of a tumor, are usually of the special interest.
To focus our attention and aid further discussion, we start by introducing notation and methodology for the univariate case. Let $N$ denote the number of subjects. For each individual $i \in \{ 1,2,...,N \}$, we take measurements at $n_i$ irregularly sampled time-points $\mathbf{t}_i = [t_{i,1},t_{i,2},...,t_{i,n_i}]'$. We assume that $t_{min} < t_{i,1} < t_{i,2} < ... < t_{i,n_i} < t_{max}$ for each $i$ and some $t_{min}, t_{max} \in \mathds{R}$. \tr{In this work, we ignore the stochastic nature of sampling timepoints and we assume that points $\bt_i$ are fixed for each individual.} Vectors $\by_i = [y_{i,1,},...,y_{i,n_i}]'$ denote observations corresponding to $\bt_i$ for each individual $i$.
%Since the time interval can be easily transformed linearly, without loss of generality we assume that $t_{i,j} \in (t_{\min},t_{\max})$ for $ j \in \{1,...,n_i\}$ and some $t_{\min} < t_{\max}$.

To model individual trajectories given pairs $(\mathbf{t}_i,\mathbf{y}_i)$ practitioners map observations into a low-dimensional space which represents progression patterns. Conceptually, a small distance between individuals in the latent space reflects similar progression patterns.

In this section, we discuss state-of-the-art approaches to estimating this low-dimensional latent embedding. We classify them into three broad categories: the direct approach, mixed-effect models, and low-rank approximations. 

\subsection{Direct approach} \label{ss:direct}

If the set of observed values for each individual is dense, elementary interpolation using a continuous basis can be sufficient for approximating the entire trajectory. Let $\{b_i: i \in \N \}$ be a basis of $L_2([t_{\min},t_{\max}])$, i.e. the space of functions $f : [t_{\min},t_{\max}] \rightarrow \mathds{R}$ such that the integral of $f^2$ on $[t_{\min},t_{\max}]$ is finite, such as splines or polynomials. In practice, we truncate the basis to a finite set of the first $K \in \N$ basis elements. Let $\bb(t) = [b_1(t),b_2(t),...,b_K(t)]'$ be a vector of $K$ basis elements evaluated at a timepoint $t \in (t_{\min},t_{\max})$. Throughout this article we use the word \emph{basis} to refer to some truncated basis of $K$ elements.

To find an individual trajectory, for each subject $i \in \{ 1,...,N \}$, we might use least squares and estimate a set of coefficients $\bw_i \in \R^K$, minimizing squared \tr{Euclidean} distance to observed points
\begin{align}\label{eq:direct-individual}
 \argmin_{\bw_i}\sum_{j=1}^{n_i}\left|y_{i,j} - \bw_i'\bb(t_{i,j})\right|^2.
\end{align}

\tr{This direct approach has two main drawbacks. First, it ignores correlations within the curves, which could potentially improve the fit. Second, if the number of observations $n_i$ for an individual $i$ is smaller or equal to the size of the basis $K$, we can fit a curve with no error leading to overfitting and unreliable estimator of the variance.}

\tr{To remedy the first issue, it is common to estimate covariance operator, compute principal functional components across all individuals and represent curves in the space spanned by the first few of them.} Such representation, referred to as a {\it Karhunen-Lo\`eve} expansion \citep{watanabe1965karhunen,kosambi2016statistics}, has became a foundation of many functional data analysis workflows \citep{ramsay1991some,yao2005linear,cnaan1997tutorial,laird1988missing,horvath2012inference,besse1997simultaneous}. %If measurements are dense, confidence in estimates of $\bw_i$ is high and therefore the sample covariance of $\bw_i$ might also be an accurate estimator. hormann2015dynamic,

A basic idea to remedy the second issue is to estimate both the basis coefficients and the covariance structure simultaneously. Linear mixed-effect models provide a convenient solution to this problem.

\subsection{Linear mixed-effect models} \label{ss:lmm}

A common approach to modeling longitudinal observation $(\mathbf{t}_i, \mathbf{y}_i)$ is to assume that data come from a linear mixed-effect model (LMM) \citep{verbeke1997linear, zeger1988models}. We operate in a functional space with a basis $\bb(t)$ of $K$ elements and we assume there exists a \emph{fixed effect} $\mu(t) = m' \bb(t)$, where $m = [m_1,...,m_K]'$ for $m_i \in \R$ for $1 \leq i \leq K$. We model the individual \emph{random effect} as a vector of basis coefficients. In the simplest form, we assume
\begin{align}\label{eq:latent-probabilistic}
 \mathbf{w}_i \sim \cN(0, \Sigma),
\end{align}
where $\Sigma$ is a $K \times K$ covariance matrix. We model individual observations as
\begin{align}\label{eq:probabilistic}
 \mathbf{y}_i|\mathbf{w}_i \sim \cN(\mu_i + B_i\mathbf{w}_i, \sigma^2I_{n_i}),
\end{align}
where $\mu_i = [\mu(t_{i,1}),\mu(t_{i,2}),...,\mu(t_{i,n_i})]'$, $\sigma$ is the standard deviation of observation error and $B_i = [\bb(t_{i,1}),...,\bb(t_{i,n_i})]'$ is the basis evaluated at timepoints defined in $\bt_i$. Estimation of the model parameters is typically accomplished by the expectation-maximization (EM) algorithm \citep{laird1982random}. For \tr{predicting} coefficients $\bw_i$ one can use the best unbiased linear predictor (BLUP) \citep{henderson1950estimation,robinson1991blup}. % or other techniques typical for mixed-effect models \citep{mardia1980multivariate}.

Since the LMM estimates the covariance structure and the individual fit simultaneously, it reduces the problem of overfitting of $\bw_i$, present in the direct approach. However, this model is only applicable if we observe a relatively large number of observations per subject since we attempt to estimate $K$ coefficients for every subject.

To model trajectories from a small number of observations, practitioners further constrain the covariance structure. If we knew the functions which contribute the most to the random effect, we could fit an LMM in a smaller space spanned by these functions. We explore possibilities to learn the basis from the data using low-rank approximations of the covariance matrix. %A solution is to learn the basis from the data, by constraining the rank of the covariance matrix or by imposing a prior on the basis parameters.

\subsection{Low-rank approximations} \label{ss:reduced-rank}

There are multiple ways to constrain the covariance structure. We can use cross-validation or information criteria to choose the best basis, the number of elements or positions of spline knots \citep{rice2001nonparametric,bigelow2009bayesian}. Alternatively, we can place a prior on the covariance matrix \citep{maclehose2009nonparametric}.

Another solution is to restrict the latent space to $q < K$ dimensions and learn from the data a mapping $A \in \R^{K \times q}$ between the latent space and the basis. In the simplest scenario with Gaussian noise, observations can be modeled as
\begin{align}\label{eq:james-model}
 \mathbf{y}_i | \bw_i \sim \cN(\mu_i + B_i A \mathbf{w}_i, \sigma^2I_{n_i}),
\end{align}
following the notation from \eqref{eq:probabilistic}.
%In this setting $\bb(t)A$
%Note that with the freedom in the choice of $A$, one can assume that $\mathbf{w}_i\ \sim\ \cN(0,I_q)$.

\citet{james2000principal} propose an EM algorithm for finding model parameters and predicting latent variables $\bw_i \in \R^q$ in \eqref{eq:james-model}. In the expectation stage, they compute the conditional mean of $\bw_i$ given $y_i$ and the model parameters, while % given the model, i.e. $\mathbf{w}_i~=~A' B_i' (\mathbf{y}_i - \mu_i)$, while
in the maximization stage, with $\bw_i$ assumed observed, they maximize the likelihood with respect to $\{\mu,A,\sigma\}$. The likelihood, given $\bw_i$, takes the form
\begin{align*}
\prod_{i=1}^N \frac{1}{(2\pi)^{n_i/2} \sigma^{n_i} |\Sigma|^{1/2}} \exp\{ &-(\by_i - \mu_i - B_i A \bw_i)'(\by_i - \mu_i - B_i A \bw_i) / 2\sigma^2 \nonumber\\
&- \frac{1}{2}\bw_i' \Sigma^{-1} \bw \}.\label{eq:likelihood}
\end{align*}

Another approach to estimating parameters of \eqref{eq:james-model} is to optimize over $\bw_i$ and marginalize $A$ \citep{lawrence2004gaussian}. This approach allows modifying the distance measure in the latent space, using the \emph{kernel trick} \citep{schulam2016disease}.

Estimation of the covariance structure of processes is central to the estimation of individual trajectories. \citet{descary2016functional} propose a method where the estimate of the covariance matrix is obtained through matrix completion.

Methods based on low-rank approximations are widely adopted and applied in practice \citep{berkey1983longitudinal, yan2017dynamic, hall2006properties, besse1986principal, yao2006penalized, greven2011longitudinal}. However, due to their probabilistic formulation and reliance on the distribution assumptions, these models usually need to be carefully fine-tuned for specific situations and existing implementations tend to be slow. This shortcoming motivates us to develop an elementary optimization framework, using existing, extensively studied, and well--optimized tools for matrix algebra.

To illustrate this problem in an elementary setting, let us assume that the data is drawn from two distributions with equal probability taking form \eqref{eq:james-model}, but with two different means: $\mu$ and $-\mu$. Then, the estimated fixed effect will be close to $0$ and the zero-mean prior distribution on the latent variables will draw these variables towards $0$.

\section{Modeling sparse longitudinal processes using matrix completion} \label{s:context}

%While low-rank mixed models, as presented in \citep{james2000principal,tipping1999probabilistic}, can be potentially extended to the regression setting, in this paper 
%[TODO: Describe the problem with existing approaches]

%The mixed-effect model, like any other probabilistic model, can be heavily biased when data come from a distribution considerably different than the assumed one. %This is of special importance in the medical context, where observations can differ in every clinical setting and fine-tuning the models may require an extensive amount of time and expertise.
%In this work, we propose a more flexible approach based solely on the $\ell_2$ approximation rather than the underlying probabilistic distributions.

We pose the problem of trajectory prediction as a matrix completion problem, and we solve it using sparse matrix factorization techniques \citep{rennie2005fast, candes2009exact,fithian2018flexible}. In the classical matrix completion problem, the objective is to predict elements of a sparsely observed matrix using its known elements while minimizing a specific criterion, commonly Mean Squared Error (MSE). The motivating example is the ``Netflix Prize'' competition \citep{bennett2007netflix}, where teams were tasked to predict unknown movie ratings using a sparse set of observed ratings. We can represent these data as a matrix of $N$ users and $M$ movies, with a subset of known elements, measured on a fixed scale, e.g.,~\mbox{1--5}.

One popular approach to approximate the observed matrix is to estiate its low-rank decomposition \citep{srebro2005generalization}. In the low-rank representation $WA'$, columns~of~$A$ spanning the space of movies can be \tr{loosely associated with some implicit (latent) characteristics such as taste, style, or genre}, and each rater is represented as a weighted sum of their preferred characteristics, i.e., a row in the matrix of latent variables $W$ (see Figure~\ref{fig:idea}).

We can use the same idea to predict sparsely sampled curves, as long as we introduce an additional smoothing step. The low-dimensional latent structure now corresponds to progression patterns, and a trajectory of each individual can be represented as a weighted sum of these ``principal'' patterns. In Figure \ref{fig:idea}, the patterns are given by $A'B'$, while the individual weights are encoded in $W$.

\begin{figure}[h]
  \centering
  \includegraphics[width=0.9\linewidth]{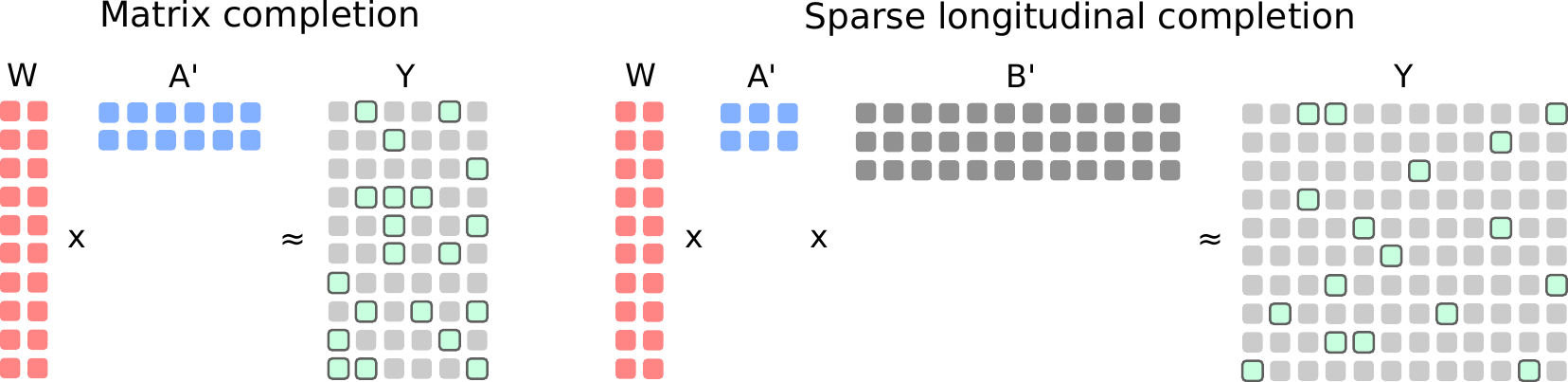}
  \caption{The key observation motivating this work is the fact that the problem of estimating trajectories can be mapped to a matrix completion problem where we estimate values of unobserved entries of a matrix. Matrix completion can be approached with matrix factorization where we look for $WA'$ of low rank, approximating observed values in $Y$ (circled green rectangles in the matrix $Y$). In the sparse longitudinal setting, we enforce smoothness by fixing the basis $B$ (e.g., splines, here with three elements), and again we find a low-rank matrix $W'A$, such that $W'AB$ approximates observed values in~$Y$.}
  \label{fig:idea}
\end{figure}

We first introduce the methodology for univariate sparsely-sampled processes. The direct method, mixed-effect models, and low-rank approximations described in Section \ref{s:background} have their counterparts in the matrix completion setting. We discuss these analogies in sections \ref{ss:direct-matrix} and \ref{ss:low-rank-matrix}. Next, in Section \ref{s:multivariate}, we show that the simple representation of the problem allows for extension to multivariate sparsely-sampled processes and a regression setting.

\subsection{Notation}\label{ss:notation}

%To use matrix factorization techniques, in this section we represent the data as a matrix of $N$ individuals and $T$ time-points, unlike the previous work on longitudinal modeling, presented in Section \ref{s:background}. 
%As presented in Section \ref{s:background},
%The theory of functional data analysis can be developed directly on $L_2([t_{\min},t_{\max}])$ curves or, more broadly, any separable Hilbert space \citep{ferraty2006nonparametric}. In this article

%As we will move from true observations to their closes neighbours on the grid, in this chapter we denote the true ones with the tilda.
For each individual $i \in \{1,2,...,N\}$ we observe $n_i$ measurements $\{\tilde y_{i,1},...,\tilde y_{i,n_i}\}$ at time-points $\{t_{i,1},t_{i,2},...,t_{i,n_i}\}$. Unlike in the prior work introduced in Section \ref{s:background} operating on entire curves, here we discretize time to $T$ time-points $G = \left[\tau_1, \tau_2, ..., \tau_T\right],$ where $t_{\min} = \tau_1$, $t_{\max} = \tau_T$ and $T$ is arbitrarily large. Each individual $i$ is expressed as a partially observed vector $r_i \in \R^T$. For each time-point $t_{i,j}$ for $1 \leq j \leq n_i$ we find a corresponding grid-point $g_i(j) = \argmin_{1 \leq k \leq T}  |\tau_k - t_{i,j}|$. We assign $y_{i,g_i(j)} = \tilde y_{i,j}$. Let $O_i = \{g_i(j): 1 \leq j \leq n_i \}$ be a set of grid indices corresponding to observed grid-points for an individual $i$. All elements outside $O_i$, i.e. $\{y_{i,j} : j \notin O_i\}$ are considered missing.

%In Section \ref{ss:direct-matrix} we show that the grid can be arbitrarily dense with a marginal compromise on performance, and therefore, without loss of generality, we assume that at most one measurement per subject is mapped to every grid-point in $G$.

For $T$ sufficiently large, our observations can be uniquely represented as a $N \times T$ matrix $Y$ with missing values. We denote the set of all observed elements by pairs of indices as $\Omega = \{ (i,j) : i\in \{1,2,...,N\}, j \in O_i \}$. Let $P_\Omega(Y)$ be the projection onto observed indices, i.e. $P_\Omega(Y) = W$, such that $W_{i,j} = Y_{i,j}$ for $(i,j) \in \Omega$ and $W_{i,j} = 0$ otherwise. We define $P^\perp_\Omega(Y)$ as the projection on the complement of $\Omega$, i.e. $P^\perp_\Omega(Y) = Y - P_\Omega(Y)$. We use $\|\cdot\|_F$ to denote the Frobenius norm, i.e. the square root of the sum of matrix elements, and $\|\cdot\|_*$ to denote the nuclear norm, i.e. the sum of singular values.

As in Section \ref{s:background} we imposed smoothness by using a continuous basis $\bb(t) = [b_1(t),b_2(t),...,b_K(t)]'$. When we evaluate the basis on a grid $G$ we get a $T \times K$ matrix $B = [\bb(\tau_1),\bb(\tau_2),...,\bb(\tau_T)]'$. %Throughout the paper we will use 

In our algorithms we use a diagonal thresholding operators defined as follows. Let $D~=~\diag(d_1,...,d_p)$ be a diagonal matrix. We define {\em soft-thresholding} as
\begin{equation*}
D_\lambda = \diag((d_1 - \lambda)_+,(d_2 - \lambda)_+,...,(d_p - \lambda)_+),\label{eq:thresholding}
\end{equation*}
where $(x)_+ = \max(x, 0)$, and {\em hard-thresholding} as
\begin{equation*}
D_\lambda^H = \diag(d_1,d_2,...,d_q,0,...,0),\label{eq:hard-thresholding}
\end{equation*}
where $q = \argmin_k(d_k < \lambda)$.

\subsection{Direct approach}\label{ss:direct-matrix}

The optimization problem \eqref{eq:direct-individual} of the direct approach described in Section \ref{ss:direct} can be approximated in the matrix completion setting. First, the bias introduced by the grid is negligible if the grid is sufficiently large and because measured processes are continouous. \tr{We approximate each observation $y_{i,j}$ with a point on the selected grid $y_{i,g(j)} \sim y_{i,j}$}. Next, we rewrite the optimization problem \eqref{eq:direct-individual} as a matrix completion problem
\begin{align}
 \argmin_{\{\bw_i\}}\sum_{i=1}^N \sum_{j=1}^{n_i}\left|y_{i,g_i(j)} - \bw_i' \bb(\tau_{g_i(j)}))\right|^2 &= \argmin_{\{\bw_i\}}\sum_{(i,k) \in \Omega}\left|y_{i,k} - \bw_i' \bb(\tau_{k}))\right|^2\nonumber\\
%% \end{align*}
%% or, in the matrix notation
%% \begin{align*}
&= \argmin_W \| P_\Omega(Y - WB') \|_F^2,\label{eq:direct-matrix}
\end{align}
where by optimization over $\{\bw_i\}$ we mean optimization over all $\{\bw_i : i \in \{ 1,2,...,N \}\}$ and $W$ is an $N \times K$ matrix composed of vectors $\{\bw_i'\}$ stacked vertically.

The matrix formulation in equation \eqref{eq:direct-matrix} and the classic approach in Section \ref{ss:direct} share multiple characteristics. In both cases, if data is dense, we may find an accurate representation of the underlying process simply by fitting least-squares estimates of $W$ or $\{\bw_i\}$. Conversely, if the data is too sparse, the problem becomes ill-posed, and the least-squares estimates can overfit. %Moreover, with $T \rightarrow \infty$ two frameworks become equivalent.

However, representations \eqref{eq:direct-individual} and \eqref{eq:direct-matrix} differ algebraically and this difference constitutes the foundation for the method introduced in this paper. The matrix representation enables us to use the matrix completion framework and, in particular, in Section \ref{ss:matrix-factorization} we introduce convex optimization algorithms for solving \eqref{eq:direct-matrix}. % with additional constraints.

Some low-rank constraints on the random effect from the mixed-effect model introduced in Section \ref{ss:lmm} can be expressed in terms of constraints on $W$. In particular in Section \ref{ss:low-rank-matrix} we show that the linear mixed-effect model can be expressed by constraining the $\rank(W)$. 
% for some $q \in \N$. 

\subsection{Low-rank matrix approximation}\label{ss:low-rank-matrix}
In the low-rank approach described in Section \ref{ss:reduced-rank} we assume that individual trajectories can be represented in a low-dimensional space, by constraining the rank of $W$.

We use a similar approach in solving \eqref{eq:direct-matrix}. One difficulty comes from the fact that optimization with a rank constraint on $W$ turns the original least squares problem into a non-convex problem. Motivated by the matrix completion literature, we relax the rank constraint in \eqref{eq:direct-matrix} to a nuclear norm constraint and we attempt to solve
%% In many situations, this rank constrain can be efficiently relaxed by a nuclear norm to make the problem convex \citep{candes2009exact, fazel2002matrix}. \citet{candes2009near} consider the nuclear norm relaxation in the setting where the solution matches exactly a linear combination of the observed elements. In particular, they assume existence of exact solutions of \eqref{eq:direct-matrix} and they solve
%% \begin{align} 
%% \text{minimize\ \ } & \|W\|_*,  \nonumber\\
%% \text{subject to\ \ } & \| P_\Omega(Y - WB') \|_F^2 = 0.
%% \label{eq:cai}
%% \end{align}
%% The problem \eqref{eq:cai} can be solved using modern convex optimization software \citep{grant2008graph,boyd2004convex}, such as SDPT3 \citep{toh1999sdpt3} or SeDuMi \citep{sturm1999using}. However, with large matrices $Y$, these general algorithms can become computationally expensive and \citet{cai2010singular,mazumder2010spectral} introduced alternative iterative algorithms leveraging the low-rank and sparse decomposition.
%% For our application, the constraint \eqref{eq:cai} requiring exact matching of the observed and predicted values is too strong. For example, if there are more measurements per subject than $\rank(W)$ or if a subject was measured at two consecutive grid-points with some observation noise, we may not be able to find a continuous solution imposed by the smooth basis $B$.
\begin{align}
  \argmin_W \| P_\Omega(Y - WB') \|_F^2 + \lambda\|W\|_*,
%% \text{minimize\ \ } & \|W\|_*,  \nonumber\\
%% \text{subject to\ \ } & \| P_\Omega(Y - WB') \|_F^2 \leq \delta
\label{eq:rank-restricted}
\end{align}
for some parameter $\lambda > 0$.

From the practical standpoint, low-rank representation corresponds to describing curves in a basis comprised of only a few functions which are the best functions for obtaining low reconstruction errors. In Figure \ref{fig:data-components} we illustrate two functions for describing processes introduced in Section \ref{s:motivation} and Figure \ref{fig:motivation}. Such low-dimensional representation of processes can also be used for clustering based on the type of their progression as we discuss in the data study (Section~\ref{s:data-study}).

\begin{figure}[ht!]
  \includegraphics[width=0.49\linewidth]{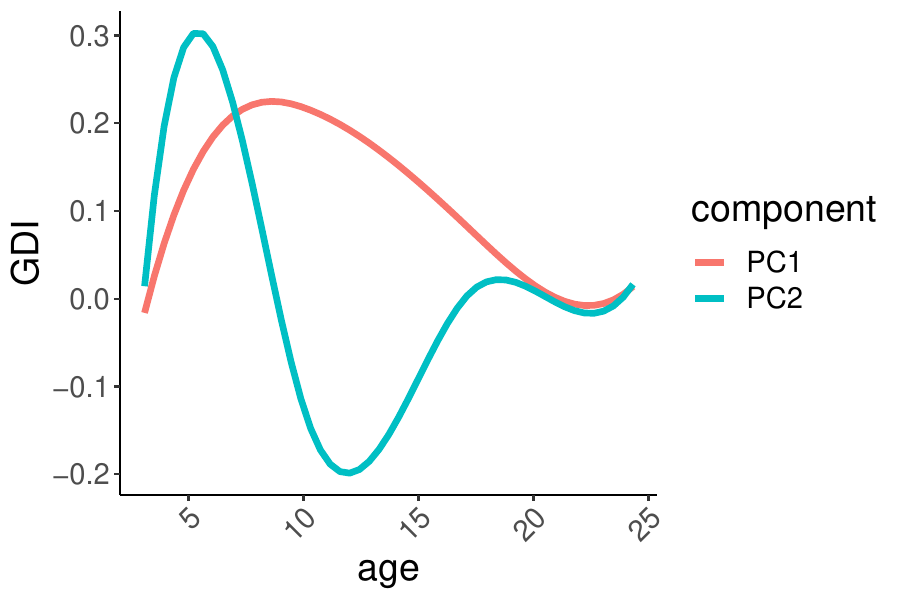}
  \includegraphics[width=0.49\linewidth]{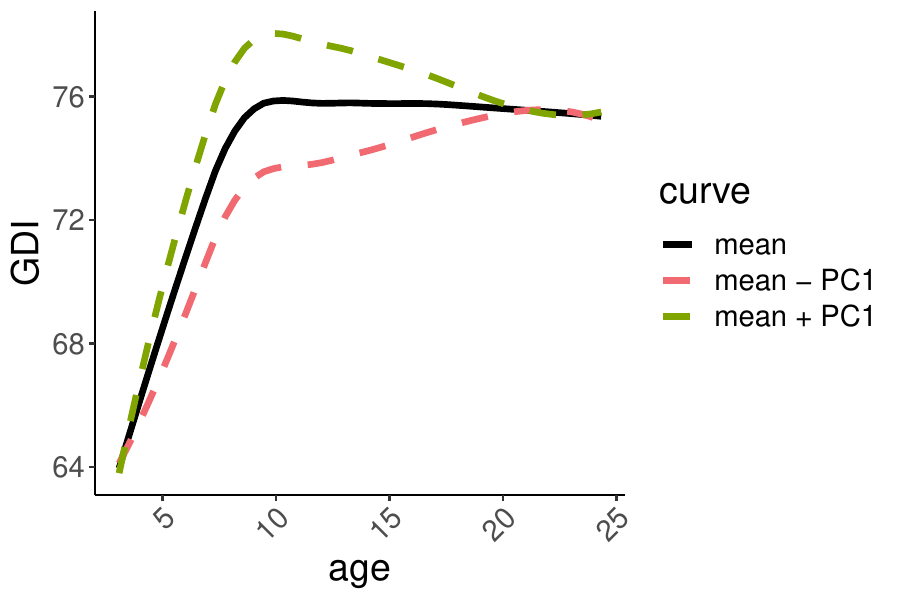}
\caption{Left: The first two trends of variability for processes of the Gait Deviation Index, derived from the first two dimensions of the low-rank decomposition of the observed matrix. We can appreciate variability around the age of 10 in the first component, and correction between and after age of 10 in the second component. Right: We illustrate how the individual score on the first component (positive and negative value) affects the population mean.}
    \label{fig:data-components}
\end{figure}

\eqref{eq:rank-restricted}, \citet{cai2010singular} propose a first-order singular value thresholding algorithm (SVT), for solving a general class of problems involving a nuclear norm penalty and a linear form of $Y$, which includes \eqref{eq:rank-restricted}. Their algorithm can handle large datasets and has strong guarantees on convergence, but it requires tuning the step size parameter, which can greatly influence performance. This limitation was addressed by \citet{ma2011fixed,mazumder2010spectral,hastie2015matrix} who introduced iterative procedures which do not depend on such tuning. Moreover, \citet{hardt2014fast,chen2015fast} propose methods for the non-convex problem. \citet{ge2016matrix} argue that the positive semidefinite matrix completion problem\tr{, where the target matrix is approximated with a positive semidefinite matrix,} has no spurious local minima\tr{, i.e. all local minima must also be global}.

In the last decade, machine learning and statistical learning communities have introduced multiple algorithms for matrix completion. Many of them are suitable for solving \eqref{eq:rank-restricted}. However, in this article we focus on analyzing the benefits of framing the trajectory prediction problem \eqref{eq:direct-individual} in the matrix completion framework, rather than on benchmarking possible solutions.

\subsection{Low-rank approximation with singular value thresholding}\label{ss:matrix-factorization}

The low-rank probabilistic approach, introduced in Section \ref{ss:reduced-rank}, is based on the observation that the underlying processes for each subject can be approximated in a low-dimensional space. %In $P_\Omega(Y)$, it is only feasible to estimate a low-rank representation of $Y$ matrix. %\citet{fazel2002matrix} show that, in order to make the optimization problem convex, the low-rank constraint \eqref{eq:rank-restricted} might be relaxed to a constraint on the nuclear norm of a matrix.
Here, we exploit the same characteristic using matrix-factorization techniques for solving the optimization problem \eqref{eq:rank-restricted}.

%[...] Note, that in the Lagrangian form, the problem \eqref{eq:rank-restricted} becomes 
%\begin{align}\label{eq:mazumder}
%\min_{\tilde{W}} \frac{1}{2} \|P_\Omega(Y - \tilde{W})\|_F^2 + \lambda\|\tilde{W}\|_*,
%\end{align}
%for some $\lambda = \lambda(\sigma) \geq 0$ and $\tilde{W}$ is a $N \times T$ matrix. \citet{hastie2015matrix} introduced a fast and parallel algorithm for solving \eqref{eq:mazumder}.
% In our longitudinal setting, estimating $\tilde{W}$ directly from \eqref{eq:mazumder} will fail since it does not account for the continuous structure. Instead, we further constrain \eqref{eq:mazumder} by imposing the continuous basis $B$. We optimize
%% \begin{align}\label{eq:matrixproblem-final}
%% \min_Z \frac{1}{2} \|P_\Omega(Y - WB')\|_F^2 + \lambda\|W\|_*,
%% \end{align}
%% for some $\lambda > 0$ and where $W$ is an $N \times K$ matrix.

For the sake of clarity and simplicity, we choose to solve the problem \eqref{eq:rank-restricted} with an extended version of the \textsc{Soft-Impute} algorithm designed by \citet{hastie2015matrix,mazumder2010spectral}. As discussed in Section \ref{ss:low-rank-matrix}, many other convex optimization algorithms can be applied.

The key component to the solution is the following property linking problem \eqref{eq:rank-restricted} with the Singular Value Decomposition (SVD). Consider the fully observed case of \eqref{eq:rank-restricted}. Using Theorem 2.1 in \citet{cai2010singular}, one can show that the optimization problem
\begin{align}\label{eq:optsvd}
\argmin_{W} \frac{1}{2} \| Y - WB' \|_F^2 + \lambda\|W\|_*,
\end{align}
where $B'B= I$, has a unique solution $W = \cS_\lambda (YB)$, where $\cS_\lambda(X) = UD_\lambda V'$ and $X = UDV'$ is the SVD of $X$. We refer to $\cS_\lambda(X)$ as the singular value thresholding (SVT) of $X$.

% \section{Longitudinal impute algorithms}
In order to solve \eqref{eq:optsvd} with a sparsely observed  $Y$, we modify the \textsc{Soft-Impute} algorithm  to account for the basis $B$. Our Algorithm \ref{alg:soft-impute} iteratively constructs better approximations of the global solution for each $\lambda$ in some predefined set $\{\lambda_1, \lambda_2, ..., \lambda_k\}$ for $k \in \N$. For a given approximation of the solution $W^{old}$, we use $W^{old}B'$ to impute unknown elements of $Y$ obtaining $\tilde{Y}$. Then, we construct the next approximation $W^{new}$ by computing SVT of $\tilde{Y}$.

%In Appendix \ref{s:convergence} we show that $W^{new,k}$ converges to the global solution $W_\lambda$ of \eqref{eq:matrixproblem-final}.

As a stopping criterion, we compute the change between subsequent solution, relative to the magnitude of the solution, following the methodology in \cite{cai2010singular}. We set a fixed threshold of $\varepsilon > 0$ for this criterion, depending on the desired accuracy.

\begin{algorithm}
\caption{\textsc{Soft-Longitudinal-Impute}\label{alg:soft-impute}}
\begin{enumerate}
\item Initialize $W^{old}$ with zeros
\item Do for $\lambda_1 > \lambda_2 > ... > \lambda_k$:
\begin{enumerate}
\item Repeat:
\begin{enumerate}
\item Compute $W^{new} \leftarrow S_{\lambda_i}( (P_\Omega(Y) + P_\Omega^\perp(W^{old}B'))B )$
\item If $\frac{\|W^{new} - W^{old}\|_F^2}{\|W^{old}\|_F^2} < \varepsilon$ exit
\item Assign $W^{old} \leftarrow W^{new}$
\end{enumerate}
\item Assign $\hat{W}_{\lambda_i} \leftarrow W^{new}$
\end{enumerate}
\item Output $\hat{W}_{\lambda_1}, \hat{W}_{\lambda_2}, ... , \hat{W}_{\lambda_k}$
\end{enumerate}
\end{algorithm}

A common bottleneck of the algorithms introduced by \citet{cai2010singular,mazumder2010spectral,ma2011fixed} as well as other SVT-based approaches is the computation of the SVD of large matrices. This problem is particularly severe in standard matrix completion settings, such as the Netflix problem, where the matrix size is over $400{,}000 \times 20{,}000$. However, in our problem,
\begin{align}\label{eq:small-rank}
  \rank(WB') \leq \rank(B) = K \ll N,
\end{align}
and $K \sim 10$ in our motivating data example. While algorithms for large matrices are applicable here, the property \eqref{eq:small-rank} makes the computation of SVD feasible in practice with generic packages such as PROPACK \citep{larsen2004propack}.

%\subsection{Modeling new data.}

For estimating curves on new data, we need to find $\bw_i = V\ba_i$ minimizing the prediction error. To this end, we regress observed values $Y_i$ on $B_iV$ with a penalty $\frac{\lambda}{2}\|\ba_i\|^2$ on the coeffcients, i.e. we optimize
\begin{align*}
\min_{\ba_i} \|Y_i - B_i V \ba_i\|^2 + \frac{\lambda}{2}\|\ba_i\|^2,
\end{align*}
where $B_i$ are basis functions evaluated at the corresponding timepoints.

%\tr{[TODO] If $n_i < q$, where $q$ is the rank of the current model, we might impose some additional conditions on $U_i$ and/or shrink it}

\tr{While our algorithms for finding $W$ do not depend on the density of the grid, it is important to highlight that with the growing $T$ the norm of the solution $W_T$ grows to infinity and $W_T/\sqrt{T} \rightarrow W_0$, for some fixed $W_0$ (See Appendix \ref{s:convergence}). In order to obtain comparable results for different $T$, parameters $\lambda$ need to be adjusted to this scale (for example, by modifying the penalty to $\lambda\sqrt{T}\|W\|_*$).}

The algorithm converges at a rate $\frac{1}{t},$ where $t$ is the number of iterations (this can be established by an elementary extension of the proofs in \citet{mazumder2010spectral}). We demonstrate a formal mapping of our problem to their setting in Appendix \ref{s:convergence}.

\subsection{$l_0$ regularization}

While the nuclear norm relaxation \eqref{eq:rank-restricted} is motivated by making the problem convex, \citet{mazumder2010spectral} argue that in many cases it can also give better results than the rank constraint. They draw an analogy to the relation between best-subset regression ($\ell_0$ regularization) and LASSO ($\ell_1$ regularization as in \citet{tibshirani1996regression, friedman2001elements}). In LASSO, by shrinking model parameters, we allow more parameters to be included, \tr{which} can potentially improve predictions if the true subset is larger than the one derived through $\ell_0$ regularization. In the case of \eqref{eq:rank-restricted}, shrinking the nuclear norm allows us to include more dimensions of $W$ again potentially improving the prediction if the true dimension is high.

%While the nuclear norm normalization is motivated by the convex relaxation of the rank, \citet{mazumder2010spectral} observe that in many cases it can outperform rank restricted least squares. Analogically to \textsc{Lasso}, shrinkage can allow a rank higher than the actual rank of the matrix, improving prediction.

Conversely, the same phenomenon can also lead to problems if the underlying dimension is small. In such case\tr{s}, shrinking may cause inclusion of unnecessary dimensions emerging from noise. To remedy this issue, following the analogy with penalized linear regression, we may consider another class of penalties. In particular, we may consider coming back to the rank constraint by modifying the nuclear norm penalty \eqref{eq:rank-restricted} to $\ell_0$. We define $\|W\|_0 = \rank(W)$. The problem
\begin{align}\label{eq:matrixproblem-final-l0}
\min_{W} \frac{1}{2} \|P_\Omega(Y - WB')\|_F^2 + \lambda\|W\|_0,
\end{align}
has a solution $W = \cS_\lambda^H (YB)$, where $\cS_\lambda^H(X) = UD_\lambda^H V'$ and $X = UDV'$ is the SVD of~$X$. We refer to $\cS_\lambda^H(X)$ as the hard singular value thresholding (hard SVT) of $X$. We use Algorithm \ref{alg:hard-impute} to find the hard SVT of $YB$.

% refering to \citet{mazumder2010spectral}. %Multiple extensions and analogies to regularization in linear models can be drawn in this setting. See \citep{mazumder2010spectral} for more details on possible extensions of the problem \eqref{eq:rank-restricted}.

\begin{algorithm}
\caption{\textsc{Hard-Longitudinal-Impute}\label{alg:hard-impute}}
\begin{enumerate}
\item Initialize $W^{old}_{\lambda_i}$ with solutions $\tilde{W}_{\lambda_i}$ from \textsc{Soft-Longitudinal-Impute}
\item Do for $\lambda_1 > \lambda_2 > ... > \lambda_k$:
\begin{enumerate}
\item Repeat:
\begin{enumerate}
\item Compute $W^{new} \leftarrow S_{\lambda_i}^H( (P_\Omega(Y) + P_\Omega^\perp(W^{old}B'))B )$
\item If $\frac{\|W^{new} - W^{old}\|_F^2}{\|W^{old}\|_F^2} < \varepsilon$ exit
\item Assign $W^{old} \leftarrow W^{new}$
\end{enumerate}
\item Assign $\hat{W}_{\lambda_i} \leftarrow W^{new}$
\end{enumerate}
\item Output $\hat{W}_{\lambda_1}, \hat{W}_{\lambda_2}, ... , \hat{W}_{\lambda_k}$
\end{enumerate}
\end{algorithm}

The problem \eqref{eq:matrixproblem-final-l0} is non-convex, however, by starting from the solutions of Algorithm~\ref{alg:soft-impute} we explore the space around the global minimum of the $\ell_1$ version of the problem. \citet{ge2016matrix} show that this strategy is successful empirically.

% and in recent years there has been a growing interest in analyzing properties of non-convex algorithms for matrix completion \cite{sun2016guaranteed}.

\subsection{Dense grid and proximal gradient descent}\label{ss:continous-support}

Growing grid size parameter $T$ improves accuracy of approximation of the data on the grid, but benefits in approximation are negligible when $T$ becomes sufficiently large. However, we show that $T \rightarrow \infty$ also leads to a continous version of the algorithm, equivalent to proximal gradient descent method applied to the initial formulation of the problem \eqref{eq:direct-individual} with nuclear norm penalty.

We start by observing that in Algorithm \ref{alg:soft-impute} the Step 2(a)(i) depends on the grid and requires multiplication with all evaluations of the basis on that grid. We can rewrite the expression in $S_{\lambda_i}$ such that only a sparse number of evaluations of the basis is required. Indeed, we write
\begin{align}
(P_\Omega(Y) + P_\Omega^\perp(W^{old}B'))B &= (P_\Omega(Y) - P_\Omega(W^{old}B') + W^{old}B')B\nonumber\\
&= (P_\Omega(Y - W^{old}B'))B + W^{old}\label{eq:rearrangement}
\end{align}
and to compute $W^{old}B'$ on $\Omega$ in \eqref{eq:rearrangement} we only need $W^{old}$ and an evaluation of the basis on gridpoints corresponding to $\Omega$. Therefore, for computing $(P_\Omega(Y - W^{old}B'))B$ we also only need $B$ evaluated on gridpoints with a nonzero number of observations. Now, if we increase the number of grid points to infinity we derive a solution where the basis needs to be evaluated only at observed timepoints.

Since $B$ is orthogonal, when $T \rightarrow \infty$ the product $(P_\Omega(Y - W^{old}B'))B$ converges to~$0$ and the update in Step 2(a)(i) in Algorithm \ref{alg:soft-impute} also converges to~$0$ at the rate $1/T$. Therefore, in order to obtain comparable results for different $T$ we need to scale the update direction $(P_\Omega(Y - W^{old}B'))B$ with $\alpha / T$, for some $\alpha > 0$. We refer to $\alpha$ as the step size.

We show that at the limit $T\rightarrow\infty$ our approach is equivalent to proximal gradient descent. Let us define the loss function as in \eqref{eq:direct-individual}, i.e.
\begin{align*}
 f(W) = \sum_{i=1}^N \sum_{j=1}^{n_i}\left|y_{i,j} - \bw_i' \bb(\tau_{i,j}))\right|^2, %+ \lambda\|W\|_*
\end{align*}
where $\tau_{i,j}$ are timpoints, $y_{i,j}$ are observations. We optimize the loss function $f$ with a nuclear norm penalty for some $\lambda > 0$
\begin{align}
\argmin_{W}f(W) + \lambda\|W\|_*,\label{eq:objective}
\end{align}
where $W = [\bw_1', \bw_2', ..., \bw_N']'$. To solve problem \eqref{eq:objective} we define
\begin{align*}
 G_i = \frac{\partial}{\partial \bw_i} f(W) =-2\sum_{j=1}^{n_i} (y_{i,j} - \bw_i' \bb(\tau_{i,j}))\bb'(\tau_{i,j})
\end{align*}
and the gradient of $f$ takes form
\begin{align*}
 \nabla_W f(W) &= [G_1', G_2', ..., G_N']'.
\end{align*}

Note that $\nabla_W f(W)$ is corresponding to $(P_\Omega(Y - W^{old}B'))B$ in \eqref{eq:rearrangement} with infinitely dense grid. For the update step in the proximal gradient descent method, we use step size $\alpha > 0$ and singular value thresholding with a threshold $\alpha\lambda$ as recommended in \citet{hastie2015statistical}. We describe the full procedure in Algorithm \ref{alg:grid-free}.

\begin{algorithm}
\caption{\textsc{Grid-free soft-Longitudinal-Impute}\label{alg:grid-free}}
\begin{enumerate}
\item Initialize $W^{old}$ with zeros
\item Do for $\lambda_1 > \lambda_2 > ... > \lambda_k$:
\begin{enumerate}
\item Repeat:
\begin{enumerate}
\item Compute $W^{new} := S_{\alpha\lambda}(W^{old} - \alpha \nabla_W f(W^{old}))$
\item If $\frac{\|W^{new} - W^{old}\|_F^2}{\|W^{old}\|_F^2} < \varepsilon$ exit
\item Assign $W^{old} \leftarrow W^{new}$
\end{enumerate}
\item Assign $\hat{W}_{\lambda_i} \leftarrow W^{new}$
\end{enumerate}
\item Output $\hat{W}_{\lambda_1}, \hat{W}_{\lambda_2}, ... , \hat{W}_{\lambda_k}$
\end{enumerate}
\end{algorithm}

\citet{nesterov2013gradient} showed sufficient conditions for convergence of Algorithm \ref{alg:grid-free}. If $\alpha \in (0, 1/L]$ where $L$ is the Lipschitz constant of $\nabla_W f$, then the algorithm converges at the rate $1/t$, where $t$ is the number of steps.

\subsection{A link between the reduced-rank model and Soft-Longitudinal-Impute}\label{s:the-link}

%Let $L < K$ be a number of true factors and let $\Tau_{L\times K}$ be a matrix of $L$ factor loadings in the basis $M_{K\times T}$.

Intuitively, we might expect similarity between the principal directions derived using the probabilistic approach \eqref{eq:probabilistic} and their counterparts derived from the SVT-based approach. We investigate this relation by analyzing behavior of SVT for matrices sampled from the probabilistic model given by \eqref{eq:probabilistic}.

For simplicity, let us assume that $\mu = 0$ and the data is fully observed on a grid $G$ of $T$ time-points. Assume that observations $i \in \{1,2,...,N\}$ come from the mixed-effect model
\begin{align}
  \by_i | \bw_i &\sim \cN( B \bw_i, \sigma^2 I_T), \label{eq:model-y}\\
  \bw_i &\sim \cN(0 , \Sigma), \nonumber%\label{eq:model-w}
\end{align}
% V \Lambda V'
%
where $\Sigma$ is an unknown covariance matrix of rank $q < K$ and variables $\{\bw_i, \by_i\}$ are independent. By the spectral decomposition theorem we decompose $\Sigma = V\Lambda V'$, where $V$ is a $K\times K$ orthogonal and $\Lambda$ is a diagonal $K\times K$ matrix with $q$ positive coefficients in decreasing order.
Since $\by_i$ and $\bw_i$ are independent, the distribution \eqref{eq:model-y} can be rewritten as
\begin{align}
  \by_i &\sim \cN(0, B V \Lambda V'  B' + \sigma^2 I_T). \label{eq:model}
\end{align}
The model \eqref{eq:model} is a factor model with uncorrelated errors and $q$ factors---the first $q$ columns of $BV$.

The following theorem constitutes a link between the mixed-effect model and SVT \tr{(Theorem 9.4.1 in \citet{mardia1980multivariate})},

\begin{theorem}\label{thm:maxlike}
  Let $Y = [\by_1,...,\by_N]'$ be the observed matrix and let $S_{\sigma^2}(YB) = UD_{\sigma^2}Q'$. Then, $(D_{\sigma^2},Q)$ is the maximum likelihood estimator of $(\Lambda,V)$.
\end{theorem}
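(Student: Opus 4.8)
The plan is to recognize the statement as an instance of maximum-likelihood estimation in a Gaussian factor model with isotropic (known) noise, and to reduce the $T$-dimensional problem to a $K$-dimensional one in which the cited factor-analysis result applies directly. I start from the marginal law \eqref{eq:model}, $\by_i \sim \cN(0,\, B V \Lambda V' B' + \sigma^2 I_T)$, treat $\sigma^2$ as known, and write the Gaussian log-likelihood of the independent rows of $Y$ as $\ell(\Lambda,V) = -\tfrac{N}{2}\log|\Sigma_y| - \tfrac12 \operatorname{tr}(\Sigma_y^{-1} Y'Y)$ with $\Sigma_y = B V\Lambda V'B' + \sigma^2 I_T$. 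The goal is to show that maximizing $\ell$ over the eigenstructure $(\Lambda,V)$ of $\Sigma$ returns precisely the right singular vectors $Q$ of $YB$ together with the soft-thresholded singular values $D_{\sigma^2}$.

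First I would exploit the geometry imposed by the fixed basis. Assuming $B$ has orthonormal columns (which we may arrange by orthonormalizing the basis), decompose $\R^T = \operatorname{col}(B) \oplus \operatorname{col}(B)^\perp$. On $\operatorname{col}(B)^\perp$ the covariance $\Sigma_y$ acts as $\sigma^2 I$, so that part of the likelihood depends only on the known $\sigma^2$ and decouples; on $\operatorname{col}(B)$, in the coordinates supplied by $B$, $\Sigma_y$ restricts to $\Sigma + \sigma^2 I = V(\Lambda + \sigma^2 I)V'$. Consequently the $(\Lambda,V)$-dependent part of $\ell$ depends on the data only through the $K \times K$ matrix $B'Y'YB = (YB)'(YB)$, i.e. $YB$ is the sufficient statistic, and the problem reduces to maximizing $-\log|\Sigma + \sigma^2 I| - \operatorname{tr}\!\big((\Sigma + \sigma^2 I)^{-1}\,\tfrac1N (YB)'(YB)\big)$ over positive-semidefinite $\Sigma = V\Lambda V'$.

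Next I would apply the classical factor-analysis MLE (Theorem 9.4.1 of \citet{mardia1980multivariate}, after \citet{joreskog1967some}) to this reduced problem. Writing the eigendecomposition $\tfrac1N (YB)'(YB) = Q\,\widehat\Delta\,Q'$ with $\widehat\Delta = \diag(\widehat\delta_1,\dots,\widehat\delta_K)$, I would argue that for fixed eigenvalues of $\Sigma$ the objective is maximized by aligning the eigenvectors of $\Sigma$ with the sample eigenvectors $Q$ (a von Neumann/Ky Fan trace-inequality argument), so that $\widehat V = Q$; the objective then separates into concave scalar problems in the $\lambda_j$, each maximized at $\widehat\lambda_j = (\widehat\delta_j - \sigma^2)_+$, the truncation at zero being forced by the constraint $\lambda_j \ge 0$ and simultaneously selecting the effective rank. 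Identifying $Q$ and $\widehat\Delta$ with the right singular vectors and (squared) singular values of $YB$ then yields $\widehat V = Q$ and $\widehat\Lambda = D_{\sigma^2}$, which is the assertion --- up to the normalization convention in which the singular values of $YB$ are scaled to coincide with the eigenvalues $\widehat\delta_j$ of the projected sample covariance.

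I expect the main obstacle to lie in the reduced optimization rather than in the $B$-reduction: proving rigorously that the likelihood is jointly maximized by aligning eigenvectors with $Q$ and by the thresholded eigenvalues, including the handling of the positive-semidefiniteness constraint that sends sub-threshold directions to zero variance. This is exactly the content I would borrow wholesale from the cited Mardia/Jöreskog theorem. The genuinely new steps --- and the ones I would verify most carefully --- are the orthonormality reduction showing that $\operatorname{col}(B)^\perp$ contributes nothing to the $(\Lambda,V)$-likelihood and that $YB$ is sufficient, together with pinning down the scaling that turns the thresholded eigenvalues into the soft-thresholded singular values appearing in $\cS_{\sigma^2}(YB)$.
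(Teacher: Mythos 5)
Your proposal takes essentially the same route as the paper, which in fact offers no written proof at all: it simply observes that the marginal model \eqref{eq:model} is a factor model with $q$ factors and cites Theorem 9.4.1 of \citet{mardia1980multivariate} (after \citet{joreskog1967some}), and your reduction to $\operatorname{col}(B)$ via orthonormality, the sufficiency of $YB$, the eigenvector-alignment step, and the thresholded eigenvalues $(\widehat\delta_j - \sigma^2)_+$ are precisely the adaptation that citation stands in for. The scaling caveat you flag is genuine --- the factor-analysis MLE thresholds the eigenvalues $d_j^2/N$ of $\tfrac{1}{N}(YB)'(YB)$, whereas the theorem as stated soft-thresholds the singular values $d_j$ of $YB$ at $\sigma^2$ --- so your explicit normalization convention is exactly what is needed to make the statement literally true, a point the paper glosses over.
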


\tr{Theorem \ref{thm:maxlike} implies that in the fully observed case the estimator of $(\Lambda,V)$ converges to the true parameter at the rate $1 / \sqrt{n}$, as the MLE.

 %One approach to estimate parameters in \eqref{eq:model-y} is to first decompose $YB = U D W'$ using SVD. ,
Factor analysis methods give not only estimates of $\Lambda$ and $V$ but also \tr{predictors} of the individual latent variables $W = [\bw_1,...,\bw_N]'$.
%% For a fixed $\sigma$, the likelihood wrt. $(\lambda,V)$ is maximized by
%% \[
%%  \Lambda = D_{\sigma^2} \text{ and } V = Q,
%% \]
%% where $D_\sigma$ is the thresholded $D$ as defined in \eqref{eq:thresholding}.
In the multivariate analysis literature, there are multiple ways to estimate factor scores, i.e., a matrix $A$ such that $X \sim AD_{\sigma^2}V'$. Most notably researcher introduce Spearman's scores, Bartlett's scores, and Thompson's scores \citep{kim1978factor}. %\citet{james2000principal} find maximum likelihood estimates for all parameters together.
Simply taking $W = U$ as the estimate of the scores corresponds to the solution of \eqref{eq:optsvd} as long as $\lambda = \sigma^2$.}

In Theorem \ref{thm:maxlike} we assume that $\sigma^2$ is known, which is rarely the case in practice. However, the likelihood of $(V,\sigma)$ can be parametrized by $\sigma$, and we can find the optimal solution analytically. This corresponds to minimizing \eqref{eq:optsvd} for different $\lambda$.

%In the sparsely observed case, we want to approximate the same 
%\tr{TODO} }

This relation is also confirmed in our simulation study in Section \ref{s:simulation} (see Figure \ref{fig:principal-components}). A similar analogy is drawn between the classical principal component analysis and probabilistic principal component analysis by \citet{tipping1999probabilistic} and \citet{james2000principal}.

\section{Multivariate longitudinal data}\label{s:multivariate}

In practice, we are often interested in the prediction of a univariate process in the context of other longitudinal measurements and covariates constant over time. Examples include prediction of disease progression given patient's demographics, data from clinical visits at which multiple blood tests or physical exams are taken, or measurements which are intrinsically multivariate such as gene expression or x-rays collected over time. The growing interest in this setting stimulated research in latent models \citep{sammel1996latent} and multivariate longitudinal regression \citep{gray1998estimating,gray2000multidimensional}. \citet{diggle2002analysis} present an example case study in which longitudinal multivariate modeling enables estimation of joint confidence region of multiple parameters changing over time, shrinking the individual confidence intervals.

In this section, we present an extension of our univariate framework to multivariate measurements sparse in time. We explore two cases: (1) dimensionality reduction, where we project sparsely sampled multivariate processes to a small latent space, and (2) linear regression, where we use a multivariate process and covariates constant over time for prediction of a univariate process of interest. To motivate the methodology, we start with a regression involving two variables observed over time.

%Both cases are directly motivated by medical applications where multiple clinical tests are taken sparsely in time. The dimensionality reduction allows comparing subjects with each other in a smaller space, independent of the sampling time of the original processes. The regression setting allows using additional covariates in the estimation of the progression of the parameter of interest.  

\subsection{Illustrative example: Univariate regression}

%% Suppose we observe two processes defined as in \eqref{eq:latent-probabilistic} and \eqref{eq:probabilistic}. Let $B$ be an orthogonalized $T\times K$ matrix of $K$ splines evaluated on a grid $G$ of $T$ timepoints and define
%% \begin{align}\label{eq:latent-probabilistic-bivar}
%%   \mathbf{x|u} \sim \cN(\mu_x + B\mathbf{u}, \sigma_x^2I_{T})  \text{ \ and \ }
%%  \mathbf{y|w} \sim \cN(\mu_y + B\mathbf{w}, \sigma_y^2I_{T}),
%% \end{align}
%% where
%% \begin{align}
%%  \mathbf{u} \sim \cN(0, \Sigma_u)  \text{ \ and \ } \mathbf{w} \sim \cN(0, \Sigma_w),
%% \end{align}
%% $\Sigma_u, \Sigma_w$ are $K \times K$ covariance matrices and we model individual observations as
%% Assume that $\mathbf{x}$ and $\mathbf{y}$ are linked by a linear model in the latent spaces, i.e.
%% \begin{align}\label{eq:latent-linear}
%%  \mathbf{w} = \mathbf{u}A + \varepsilon,
%% \end{align}
%% where $A$ is a $K \times K$ matrix and $\varepsilon \sim \cN(0,\Sigma_\varepsilon)$ for some covariance matrix $\Sigma_\varepsilon$. We want to estimate $\alpha$ and mutually improve estimates of $\bw$ and $\bu$ given the relation \eqref{eq:latent-linear}.

%% \tr{TODO: Comments on how to do it directly - the probabilistic approach}

Suppose that the true processes are in a low-dimensional space of some continuous functions (e.g., splines) and that we observe them with noise. More precisely, let
\begin{align}\label{eq:definitions-xy}
  \bx_i = B\bw_i + \be_{x,i} \text{\ \ and\ \ } \by_i = B\bu_i + \be_{y,i},
\end{align}
for $1 \leq i \leq N$, where $\bx_i,\by_i,\be_{x,i},\be_{y,i}$ are $T \times 1$ vectors, $\bw_i, \bu_i$ are $K \times 1$ vectors and $B$ is a $T \times K$ matrix of $K$ splines evaluated on a grid of $T$ points. We assume zero-mean independent errors $\be_{x,i},\be_{y,i}$ with fixed covariance matrices $\Sigma_X,\Sigma_Y$ respectively, and that the true processes underlying the observed $\bx_i$ and $\by_i$ follow a linear relation in the spline space, i.e.
\begin{align}\label{eq:linear-xy}
  \bu_i = A'\bw_i,
\end{align}
where $A$ is an unknown $K \times K$ matrix.

Let $X,Y,U,W$ be matrices formed from $\bx_i',\by_i',\bw_i',\bu_i'$ stacked vertically. From \eqref{eq:linear-xy} we have $U = WA$, while \eqref{eq:definitions-xy} implies
\begin{align}\label{eq:almost-errors-in-variables}
X = WB' + E_X \text{ \ and \ } Y = WA B' + E_{Y},
\end{align}
where $E_X,E_Y$ are matrices of observation noise. Without loss of generality we assume that $B$ is orthonormal. We have full freedom to choose the basis $B$, since any basis can be orthogonalized using, for example, the singular value decomposition. 

Due to orthogonality of $B$ and after multiplying both expressions in \eqref{eq:almost-errors-in-variables} by $B$ we can map the problem to the classical multivariate {\em errors-in-variables models}. %\citep{gini1921sull}.
Let
\begin{align}\label{eq:errors-in-variables}
  \tilde{X} = XB = W + \tilde{E}_X \text{ and } \tilde{Y} = YB = WA + \tilde{E}_Y,
\end{align}  
where $\tilde{E}_X = E_XB$ and $\tilde{E}_Y = E_YB$. In errors-in-variables models it is assumed that the parameters $W$ and $A$ are unknown, and are to be estimated. Both regressors and responses are measured with noise (here $\tilde{E}_X$ and $\tilde{E}_Y$). The parameter $W$ can be interpreted as a latent representation of both $\tilde{X}$ and $\tilde{Y}$.

The problem of estimating parameters in \eqref{eq:errors-in-variables} has been extensively studied in literature dating back to \citet{adcock1878problem}. Two main methods for estimating parameters in \eqref{eq:errors-in-variables} are {\em maximum likelihood estimates (MLE)} and {\em generalized least squares estimators (GLSE)}. The estimators in MLE are derived under the assumption of certain distributions of the errors. In GLSE, the only assumption about errors is that they are independent, zero-mean, and they have a common covariance matrix. Then, $\tilde{X} - W$ and $\tilde{Y} - WA$ are zero-mean and estimates for $W$ and $B$ can be found by optimizing some norm of these expressions. \citet{gleser1973estimation} show that in the no-intercept model for $\tilde{X}$ and $\tilde{Y}$ of the same size (as in our case) and under the assumption of Gaussian errors, MLE and GLSE give the same estimates of $A$, if GLSE are derived for the Frobenius norm. 

In this work, we focus on the GLSE approach as it can be directly solved in our matrix factorization framework and we find it easier to deploy and extend in practice. %\cite{gleser1981estimation} shows that if variances of the noise are known, we can transform the data in such way that the variance matrices of the noise in variables are of the form $\Sigma_Y = \sigma^2_XI_T$ and $\Sigma_Y = \sigma^2_YI_T$.
To account for different magnitudes of the noise in $X$ and $Y$, we consider the optimization problem with weights
\begin{align}\label{eq:lag:bivariate}
  \minimize_{A,W} & \frac{1}{\sigma_X^2}\|XB - W \|_F^2 + \frac{1}{\sigma_Y^2} \| YB - WA \|_F^2,
%  \subjectto & \rank(W) = k.
\end{align}
where $\sigma_X,\sigma_Y > 0$. Let $\gamma = \sigma_X^2 / \sigma_Y^2$. Then \eqref{eq:lag:bivariate} can be transformed to
%% \begin{align}\label{eq:lag:bivariate2}
%%   \minimize_{A,W} & \| (\gamma Y:X) - W(\gamma AB':B') \|_F^2.%,\nonumber\\
%% %  \subjectto & \rank(W) = k.
%% \end{align}
%% Then \eqref{eq:lag:bivariate2} is equivalent to 
\begin{align}\label{eq:lag:bivariate3}
  \minimize_{A,W} & \| (XB : \gamma YB) - W(I: \gamma A) \|_F^2,%,\nonumber\\
%  \subjectto & \rank(W) = k.
\end{align}
where $(\cdot : \cdot)$ operator stacks \tr{horizontally} matrices with the same number of rows. To solve \eqref{eq:lag:bivariate3}, we show that the SVD of $(XB : \gamma YB)$ truncated to the first $K$ dimensions, can be decomposed to $W(I: \gamma A)$. Let $USV'$ be the SVD of $(XB : \gamma YB)$, with
\[
U = \begin{bmatrix}
  U_1 & U_2
\end{bmatrix},
S = \begin{bmatrix}
      S_{11} & S_{12}\\
      S_{21} & S_{22}
\end{bmatrix} 
\text{ and }
V = \begin{bmatrix}
      V_{11} & V_{12}\\
      V_{21} & V_{22}
    \end{bmatrix},
\]
where each $S_{ij}$ and $V_{ij}$ is a $K\times K$ matrix for $1 \leq i,j \leq 2$ and each $U_i$ is a $N \times K$ matrix for $1 \leq i \leq 2$. By Lemma 2.1 and Lemma 2.2 in \citep{gleser1981estimation} matrix $V_{11}$ is almost surely nonsingular. Therefore, $V_{11}^{-1}$ almost surely exists and we can transform the decomposition such that $(I : \gamma A) = (V_{11}')^{-1}\begin{bmatrix}V_{11}' & V_{21}'\end{bmatrix}$ and $W = U_1 S_{11} V_{11}'$, solving \eqref{eq:lag:bivariate3}.

For partially observed data, if they are very sparse, it might be essential to constrain the rank of the solution. The partially-observed and rank-constrained version of the problem~\eqref{eq:lag:bivariate3} takes the form 
\begin{align*}%\label{eq:lag:bivariate2-partial}
  \minimize_{A,W} & \| P_{\tilde\Omega}((X:\gamma Y) - W(B':\gamma A B')) \|_F^2,\nonumber\\
  \subjectto & \rank(W(B':\gamma A B')) = k,
\end{align*}
where $k$ is the desired rank of the solution and $P_{\tilde\Omega}$ is a projection on \[\tilde\Omega = \{(q,r): (q,r) \in \Omega \text{ or } (q,r-T) \in \Omega\}. \] As previously, for an unknown $k$ we can introduce a rank penalty using the nuclear norm
\begin{align}\label{eq:lag:bivariate2-partial}
  \minimize_{A,W} & \| P_{\tilde\Omega}((X:\gamma Y) - W(B':\gamma A B')) \|_F^2 + \lambda\|W(B':\gamma A B')\|_*.
\end{align}
The algorithm in the general case of multiple processes is derived in Section \ref{ss:dim-red}.

Although we motivate the problem as a regression of $Y$ on $X$, $X$ and $Y$ are symmetric in \eqref{eq:lag:bivariate3}. The low-rank matrix $W$ is, therefore, a joint low-rank representation of matrices $X$ and $Y$ and thus our method can be seen as a dimensionality reduction technique or as a latent space model. In Section \ref{ss:dim-red} we extended this idea to a larger number of variables. In Section~\ref{ss:regression} we discuss how this approach can be used for regression.

The linear structure of \eqref{eq:almost-errors-in-variables} allows us to draw analogy not only to the errors-in-variables models but also to the vast literature on {\em canonical correlation analysis (CCA)}, {\em partial least squares (PLS)}, {\em factor analysis (FA)}, and {\em linear functional equation (LFE) models}. \cite{borga1997unified} show that solutions of CCA and PLS can also be derived from the SVD of stacked matrices, as we did with $(XB:\gamma YB)$ in \eqref{eq:lag:bivariate3}. \cite{gleser1981estimation} thoroughly discusses the relation between errors-in-variables, FA, and LFE.

%One can show that solving the problem \eqref{eq:lag:bivariate} is equivalent to finding the first $k$ principal directions in the canonical correlation analysis (CCA) \citep{hotelling1936relations}. Moreover, if $\gamma=1$ the problem is equivalent to PLS \citep{wold1975soft}.

Finally, the method of using SVD for stacked matrices has also been directly applied in the recommender systems context. \citet{condli1999bayesian} showed that for improving prediction of unknown entries in some partially observed matrix $Q$ one might consider a low-rank decomposition of $(Q:R)$, where $R$ are additional covariates for each row, e.g., demographic features of individuals. %The idea of stacking the matrices together motivates the dimensionality reduction technique discussed in Section~\ref{ss:dim-red}.

%% Using SVD any solution can be written as $(\hat{X} : \hat{Y}) = W (V_X'M' : V_Y'M')$. Now let $\hat{X} = WV_X'M'$ and $\hat{Y} = WV_Y'M'$. 
%% The problem \eqref{eq:two-vars} can be becomes
%% \begin{align*}
%% \argmin_{W,V_X,V_Y} \frac{1}{2} \| XM - W V_X' \|_F^2 + \frac{1}{2} \| YM - W V_Y' \|_F^2 + \lambda\|W\|_*,
%% \end{align*}
%% which is equivalent to
%% \begin{align}\label{eq:final-bivar}
%% \argmin_{W,V_X,V_Y} \frac{1}{2} \| (XM:YM) - W (V_X:V_Y)' \|_F^2 + \lambda\|W\|_*.
%% \end{align}
%% where $(\cdot:\cdot)$ operator is concatenation of matrices column-wise. This can be solved using direct extension of Algorithm \ref{alg:soft-impute}.

\subsection{Dimensionality reduction}\label{ss:dim-red}

Suppose that for every individual we observe multiple variables varying in time (e.g. results of multiple medical tests at different times in a clinic) and we want to find a projection on $\R^d$ maximizing the variance explained for some $d \in \N$. This projection would correspond to characterizing patients by their progression trends of multiple metrics simultaneously.

%For fully observed $X_1,...,X_p$ this would correspond to the reduced-rank SVD of concatenated matrices $X_1~:~X_2~:~...~:~X_p$. %In this section, we introduce a method for finding such a projection when the data is sparsely observed. We illustrate the method on bivariate processes; however, it can be extended to any $p$-dimensional processes. 

We extend the equation \eqref{eq:lag:bivariate3} to account for a larger number of covariates and we do not impose decomposition of the solution yet. We formulate the following optimization problem
\begin{align}\label{eq:dim-reduction}
  \argmin_{W} &\| (X_1B:X_2B:...:X_pB) - W \|_F^2 + \lambda\|W\|_*,
\end{align}
where $X_i$ are some $N \times T$ matrices corresponding to the processes measured on a grid of $T$ points, $B$ is a basis evaluated on the same grid and orthogonalized (a $T \times K$ matrix), and $W$ is a $N \times pK$ matrix.

%where $W$ is a $N \times d$ matrix, $A = (A_1:A_2:...:A_p)$ with each $A_i$ is $d \times d_i$ for some $d_i \geq 1$ and $B_{i}$ is a basis selected for approximation of processes $X_{i}$ for $1 \leq i \leq p$. Each basis $B_i$ is $d_i$ dimensional and it is evaluated on some time-grid $T_i$.

Let $ \bB = I_p \otimes B $ be the Kronecker product of $p \times p$ identity matrix and $B$, i.e. a $pT \times pK$ matrix with $B$ stacked $p$ times on the diagonal, and let $\bX = (X_{1}: X_{2}:...: X_{p})$. Matrix $\bB$ is orthogonal and therefore results developed in Section \ref{ss:reduced-rank} apply here. In particular, singular value thresholding solves
\begin{align*}%\label{eq:multivar-scaled}
\argmin_{W} &\| \bX - W \bB' \|_F^2 + \lambda\|W\|_*
\end{align*}
and we can use Algorithm \ref{alg:soft-impute} for solving
\begin{align}\label{eq:multivar-partially}
\argmin_{W} &\| P_{\Omega}\left(\bX - W\bB\right) \|_F^2 + \lambda\|W\|_*,
\end{align}
where $P_{\Omega}$ is the projection on observed indices $\Omega$.

The optimization problem \eqref{eq:dim-reduction} can be further extended. For example, we can chose a different basis for each process or scale individual contributions of processes using scaling factors. By modifying Equation \eqref{eq:multivar-partially} correspondingly, we can extend our solution to these cases.

\subsection{Regression}\label{ss:regression}

In practice, we are often interested in \tr{the relationship} between the progression of an individual parameter (e.g., cancer growth) and some individual features constant over time (e.g., demographics) or progressions of other covariates (e.g., blood tests, vitals, biopsy results). We show how our framework can be used to exploit relations between curves in order to improve estimation of progression of the main parameter of interest.

We start with a regression problem with fixed covariates and sparsely observed response trajectories. Assume that for each subject $1\leq i\leq N$ we observe covariates  $\bx_i \in \mathds{R}^d$ and samples from trajectories $\by_i\in \mathds{R}^{n_i}$ following the distribution
\[
\by_i | \bx_i \sim \mathcal{N}(B_iA\bx_i, \sigma^2 I_{n_i}),
\]
where $B_i$ is the matrix of evaluations of the basis on the same evaluation points as $\by_i$ (See Section \ref{ss:lmm}). We map the problem to the matrix completion framework as in Section \ref{s:context}. Let $X$ be a $N \times d$ matrix of observed covariates, $Y$ be a sparsely observed $N \times T$ matrix of trajectories, and $B$ be a $T \times K$ matrix representing a basis of $K$ splines evaluated on a grid of $T$ points. We consider the optimization problem
\begin{align}\label{regression}
 \argmin_A \| P_\Omega(Y - XAB')\|^2,
\end{align}
where $A$ is a $d \times K$ matrix and $P_\Omega$ is a projection on the observed indices $\Omega$. To solve \eqref{regression} we propose an iterative Algorithm \ref{alg:longitudinal-regression}.

\begin{algorithm}
\caption{\textsc{Sparse-Regression}\label{alg:longitudinal-regression}}
\vspace{3pt}
\begin{enumerate}
\item Initialize $A$ zeros
\item Repeat till convergence:
\begin{enumerate}
\item Impute regressed values $\hat{Y} = P_\Omega(Y) + P_\Omega^\perp(XAB')$
\item Compute $A^{new} \leftarrow (X'X)^{-1} X'\hat{Y}B$
\item If $\frac{\|A^{new} - A\|_F^2}{\|A\|_F^2} < \varepsilon$ exit
\item Assign $A \leftarrow A^{new}$
\end{enumerate}
\item Return $A$
\end{enumerate}
\end{algorithm}

% We suggest a two-step procedure. We first find a joint low-rank decomposition $W$ of covariates $X_1,...,X_p$. Now, if all variability is driven by covariates $W$ w

Suppose we want to incorporate other variables varying in time for prediction of the response process. We can directly apply the method proposed in Section \ref{ss:dim-red} and model the response and regressors together. However, it might be suboptimal for prediction, as it optimizes the least-squares distance in all variables rather than only the response. This difference is analogous to the difference between regression line of some univariate $y$ on independent variables $x_1,...,x_p$ and the first principal component analysis of $(y,x_1,...,x_p)$, used for prediction of $y$. While the first one minimizes the distance to $y$, the latter minimizes the distance to $(y,x_1,...,x_p)$, which is usually less efficient for predicting $y$.

Alternatively, we can use methodology from Section \ref{ss:dim-red} only for covariates. The solution of \eqref{eq:multivar-partially} can be decomposed into $W = USV'$, and we regress $Y$ on $U$ as in \eqref{regression}.
%However, some part of the covariance of processes $Y$ observed in the data might not be explained by covariates $X$. 
%Note that the latent representation $W$ can be constructed either by applying sparse functional PCA, by \textsc{Soft-Longitudinal-Impute}, jointly by \textsc{Soft-Longitudinal-PCA} or by other low-rank representation of $(X_1,...,X_p)$.
%To incorporate additional trends of variability in $Y$ we introduce an additional unknown covariate $Z$. Let $Z$ be an unknown $N \times K$ matrix of rank $d_1$ and
Let $U$ be an $N \times d_2$ orthogonal matrix derived from $(X_1,...,X_p)$, where $d_2$ is the numbers of latent components. We search for a $d_2 \times K$ matrix $A$ solving
\begin{align}\label{eq:pcr}
\minimize_{A} \|P_\Omega(Y - UAB')\|_F^2 + \lambda\|A\|_*,
\end{align}
where $P_\Omega$ is a projection on a set of observed coefficients.
%For a known $A$ we can find $Z$ by applying methodology developped in Section \ref{ss:reduced-rank} to the partially observed matrix $(Y - UAB')$. For a given $UA$, the problem \eqref{eq:pcr} is a regression of $Y - UAB'$ on $Z$ for which we we can use Algorithm \ref{alg:longitudinal-regression}. Based on this observation
We propose a two-step iterative procedure (Algorithm~\ref{alg:sparse-regression}).

% where we fill in the missing values in $Y$ with zeros, multiply the expression by $B$, regress $YB$ on $WA'$, and use the new $A$ for filling in the missing values, till convergence. We refer to Algorithm \ref{alg:sparse-regression} for the full description of the procedure.

%% Note that $B$ can be estimated by minimizing 
%% \begin{align}\label{eq:regression:minimization}
%% \argmin_B \|P_\Omega((W'W)^{-1}W'Y - BM')\|_F^2,
%% \end{align}

%% \tr{TODO: How it works. Regularization. References.\\
%% It should be very simple to show that Algorithm \ref{alg:sparse-regression} defines a contraction operator with the fixed-point equal to the solution of \eqref{eq:regression:minimization}. \citep{bertsekas1999nonlinear}
%% }
\begin{algorithm}
\caption{\textsc{Sparse-Longitudinal-Regression}\label{alg:sparse-regression}}
\vspace{3pt}
\begin{flushleft}
\textbf{Step 1: Latent representation}
\end{flushleft}
\begin{enumerate}
\item For sparsely observed $\bX = (X_1,X_2,...,X_p)$ find latent scores $U$ (Algorithm \ref{alg:soft-impute})
%\item Define $W = W_1:W_2:...:W_p$
%\item Orthogonalize $W$
\end{enumerate}
\begin{flushleft}
\textbf{Step 2: Regression}
\end{flushleft}
\begin{enumerate}
\item For each $\lambda_1,\lambda_2,...,\lambda_k$
  \begin{itemize}
  \item Get $A_{\lambda_i}$ by solving the regression problem \eqref{eq:pcr} with $\textbf{Y},U,\lambda_i$ (Algorithm \ref{alg:longitudinal-regression})
  %% \item Repeat until convergence:
  %%   \begin{enumerate}
  %%   \item Find $Z^{new}$ by solving $\|P_\Omega( (Y - UA_0B') - ZB')\|_F^2 + \lambda_i\|Z\|_*$ (Algorithm \ref{alg:soft-impute})
  %%   \item Find $A^{new}$ by solving $\|P_\Omega( (Y - Z^{new}B') - UAB')\|_F^2$ (Algorithm \ref{alg:longitudinal-regression})
  %%   \item If $\frac{\|A^{new} - A_0\|_F^2}{\|A_0\|_F^2} + \frac{\|Z^{new} - Z_0\|_F^2}{\|Z_0\|_F^2} < \varepsilon$ exit
  %%   \item Assign $A_0 \leftarrow A^{new},Z_0 \leftarrow Z^{new}$
  %%   \end{enumerate}
  %% \item Assign $A_{\lambda_i} \leftarrow A_0, Z_{\lambda_i} \leftarrow Z_0$
  \end{itemize}
\item Return $A_{\lambda_1}, A_{\lambda_2}, ..., A_{\lambda_k}$
\end{enumerate}
\end{algorithm}

%It's important to note that due to temporal dependence and intervening events between progression curves (e.g. a blood test influencing a treatment decision can implicitly change the progression of the disease resulting in different future values of a blood test), finding a causal relationship is particularly difficult and is beyond the scope of our work.

\section{Simulations}\label{s:simulation}

We illustrate the properties of the multivariate longitudinal fitting in a simulation study. First, we generate curves with quickly decaying eigenvalues of covariance matrices. Then, we compare the performance of the methods in terms of the variance explained by the predictions. %We use bimodal distribution to emphasize that

\subsection{Data generation} Let $G$ be a grid of $T$ equidistributed points and let $B$ be a basis of $K$ spline functions evaluated on the grid $G$. %We generate observations from the model $\eqref{eq:model}$ with $p = 3$ true components. %We set $V\Lambda V'$ to a matrix with eigenvalues $\diag[1-\frac{1}{7},...,1 - \frac{p}{8},0,...,0]$.
We simulate three $N \times K$ matrices using the same procedure $\cG(r_1, r_2, K, N)$, where $r_1, r_2 \in \R_+^K$:
\begin{enumerate}
\item Define the procedure $\cM(r)$ generating symmetric matrices $K \times K$ for a given vector $r \in \R_+^K$:
  \begin{enumerate}
    \item[(i)] Simulate $K \times K$ matrix $S$
    \item[(ii)] Use SVD to decompose $S$ to $UDV'$
    \item[(iii)] Return $Q = V \diag[r] V' $, where $\diag[r]$ is a diagonal matrix with $r$ on the diagonal
  \end{enumerate}
\item Let $\Sigma_1 = \cM(r_1)$, $\Sigma_2 = \cM(r_2)$ and $\mu = \cN(0, I_K)$. %, where $\cN$ denotes the multivariate Gaussian distribution
\item Draw $N$ vectors $v_i$ from the distribution
\begin{align*}
v_i \sim 
\begin{cases}
\cN(2\mu, \Sigma_1) & \text{ if } 1 \leq i \leq N/3\\
\cN(-\mu, \Sigma_2) & \text{ if } N/3 < i \leq N
\end{cases}
\end{align*}
\item Return a matrix with rows $[v_i]_{1 \leq i \leq N}$.
\end{enumerate}
Define \[ r_1 = [1, 0.4, 0.005, 0.1 \exp(-3), ..., 0.1 \exp(-K+1)] \]  and \[ r_2 = [1.3, 0.2, 0.005, 0.1 \exp(-3), ..., 0.1 \exp(-K+1)],\]
and let $X_1,X_2,Z$ be generated following the procedure $\cG(r_1,r_2,K,N)$ and let $Y = Z + X_1 + X_2$. We consider $X_1,X_2$ and $Y$ as coefficients in a spline space $B$. We derive corresponding functions by multiplying these matrices by $B'$, i.e. $X_{f,1} = X_1B'$, $X_{f,2} = X_2B'$ and $Y_f = YB'$. We set $N=100$.

We choose $K=7$ and $T=31$ and we uniformly sample $10\%$ indices $\Omega \subset \{1,...,N\} \times \{1,...,T\}$, i.e. around $3$ points per curve on average. Each observed element of each matrix $X_{f,1}, X_{f,2}$ and $Y$ is drawn with Gaussian noise with mean $0$ and standard deviation $0.25$. The task is to recover $Y$ from sparsely observed elements $\{Y_{i,j} : (i,j) \in \Omega\}$. 

\subsection{Methods}

We compare \textsc{Soft-Longitudinal-Impute} (SLI) defined in Algorithm \ref{alg:soft-impute}, \textsc{Proximal-gradient} (PG) method defined in Algorithm \ref{alg:grid-free}, and the fPCA procedure \citep{james2000principal}, implemented by \citet{peng2009geometric}. All three algorithms require a set of basis functions. In all cases, we use the same basis $B$ as in the data generation process. In SLI and PG we also need to specify the tuning parameter $\lambda$, while in fPCA we need to choose the rank $R$. We use cross-validation to choose $\lambda$ and $R$ by optimizing for the prediction error on held-out (validation) observations.

We divide the observed coefficients into training ($81\%$), validation ($9\%$) and test ($10\%$) sets. We choose the best parameters of the three models on the validation set and then retrain on the entire training and validation sets combined. % Finally, since in the simulated setting we have access to the ground truth, we compare algorithms on the entire curves.
We compute the error by taking mean squared Frobenius distance between $Y$ and estimated $\hat{Y}$, i.e.
\begin{align}\label{eq:err}
 MSE(\hat{Y}) = \frac{1}{T|S|} \sum_{i\in S} \|Y_i - \hat{Y}_i \|_F^2
\end{align}
 on the test set $S$.
 
% For illustration, we also present results of \textsc{Sparse-Longitudinal-Regression} (SLR), regressing $Y$ on $X_1$ and $X_2$. Note, however, that the SLR, unlike two other methods, uses additional information about $Y$ contained in $X_1$ and $X_2$. This comparison is only meant to validate our approach.

 We train the algorithms with all combinations of parameters: %the number of splines $K \in \{2,3,...,d\}$
 regularization parameter for SLI and PG procedures $\lambda \in \{10, 15, 20, ..., 50\}$ and the rank for fPCA procedure $d \in \{2,3,4\}$. We compare the three methods fPCA, PG, and SLI, to the baseline {\it null model} which we define as the population mean across all visits.

\subsection{Results}

%% \begin{figure}[ht!]
%% %  \includegraphics[width=0.49\linewidth]{}
%%   \includegraphics[width=0.49\linewidth]{}
%%   \includegraphics[width=0.49\linewidth]{}
%% %  \includegraphics[width=0.49\linewidth]{}
%%   \caption{Three example curves (green, blue, and red) for each of the three methods applied to the simulated data. In each plot, dashed lines correspond to the original curves (to be estimated) from which the data is sampled, while full dots are the observations. We aim at reconstructing the dashed lines. Solid lines are the predictions from two models: functional PCA \citep{james2000principal} (left) and Sparse Longitudinal Impute (Algorithm \ref{alg:soft-impute}) (right).}
%%   \label{fig:example-predictions}
%% \end{figure}
\begin{figure}[ht!]
  \includegraphics[width=0.49\linewidth]{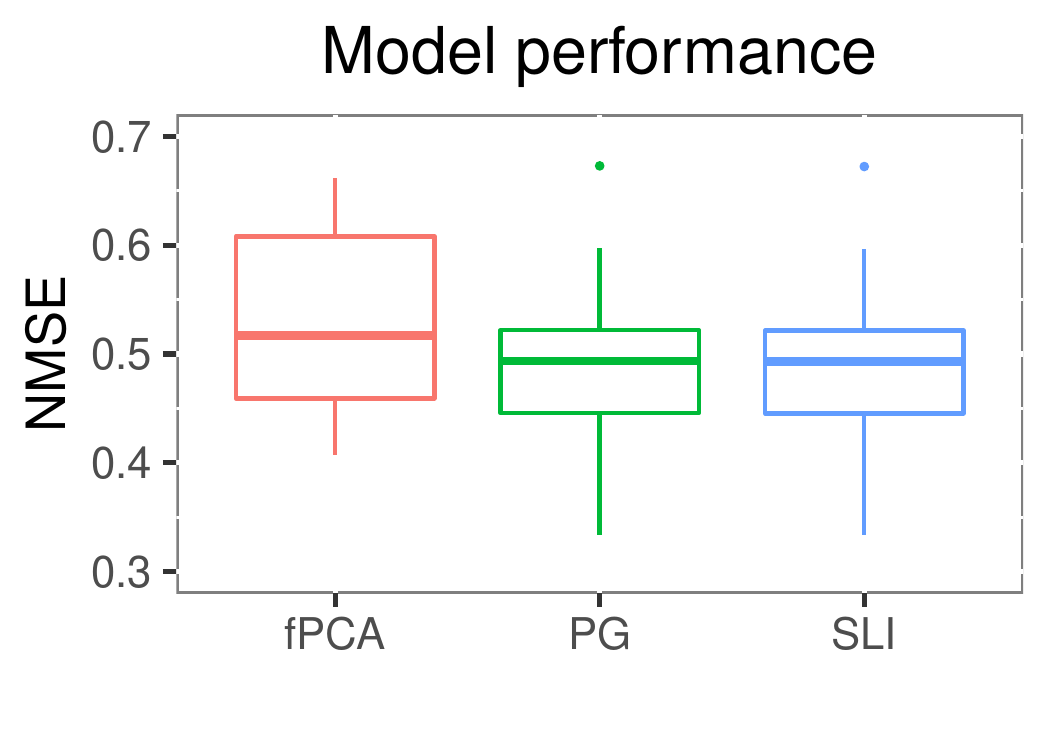}
  \includegraphics[width=0.49\linewidth]{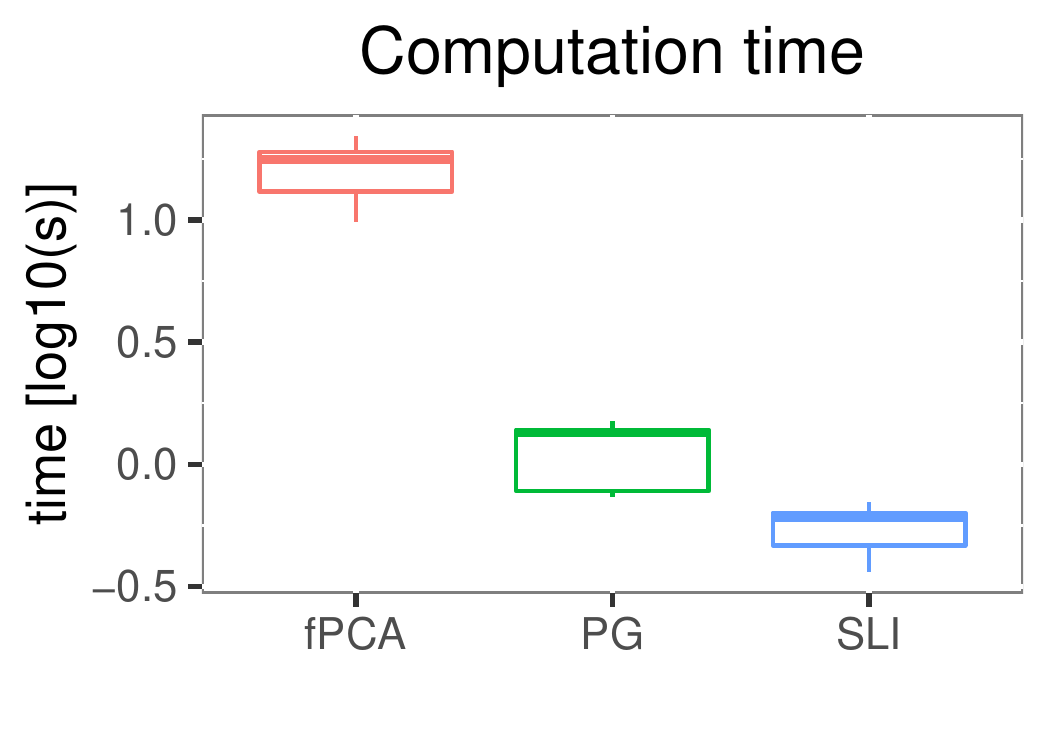}\\
  \centering\includegraphics[width=0.6\linewidth]{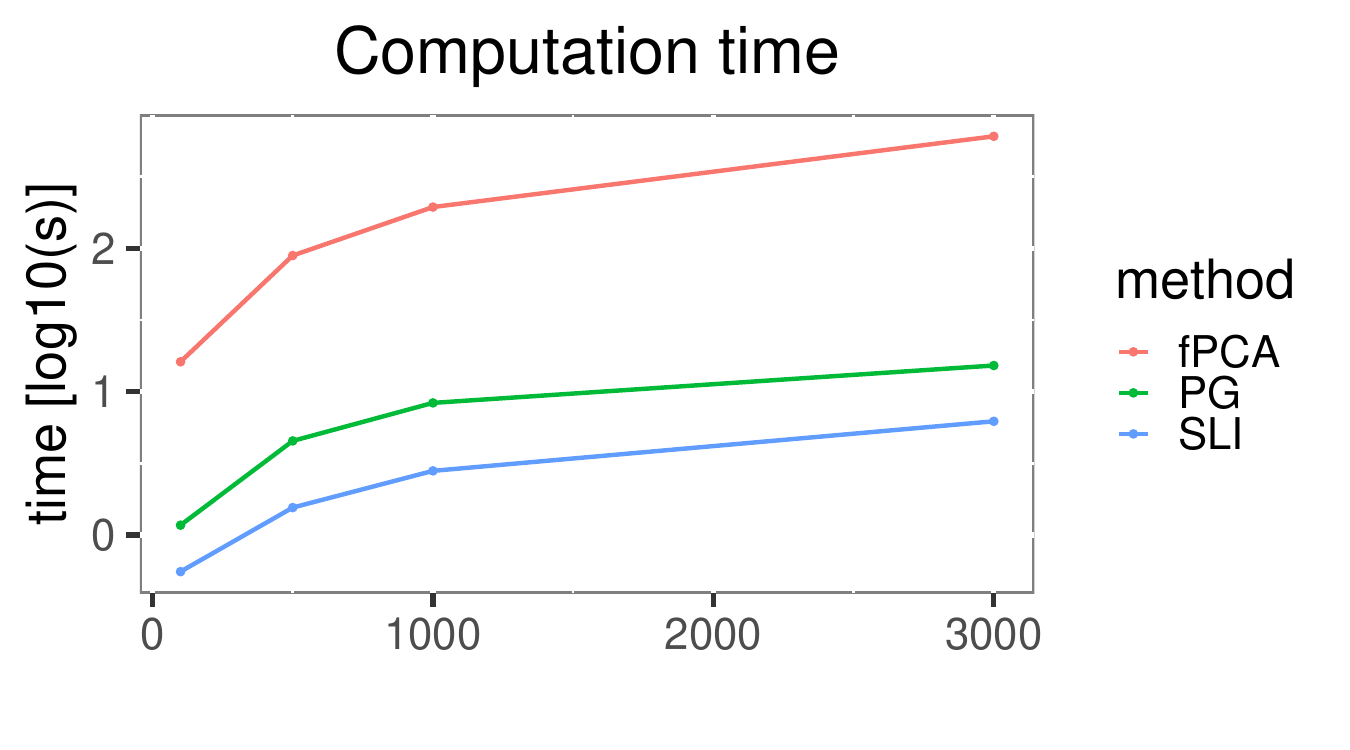}\\

  \caption{We illustrate performance (top left) and computation time (top right and bottom) of three estimation methods on simulated data. We compare functional PCA (fPCA; James et al. 2000) with Soft-Longitudinal-Impute (SLI) and proximal gradient (PG). Performance is measured by normalized mean squared error (NMSE; the lower the better). Boxplots in top panels represent distributions over $100$ repetitions with $n=100$ simulated subjects. In the bottom panel we illustrate computation time with different $n\in\{100,500,1000,3000\}$. Computation time is measured in seconds and represented on the log scale. On the absolute scale, with $n=3000$, methods fPCA, PG and SLI take on average $606, 15,$ and $6$ seconds.}
  \label{fig:performance}
\end{figure}

The SLI and PG methods achieve performance similar to fPCA \citep{james2000principal} but they are substantially faster as presented in Figure \ref{fig:performance}. In Figure \ref{fig:principal-components} we present the first components derived from both fPCA and SLI. In Figure \ref{fig:estimated-rank}, we present the estimated rank and cross-validation error of one of the simulation runs.

 \begin{figure}[ht!]
  \includegraphics[width=0.49\linewidth]{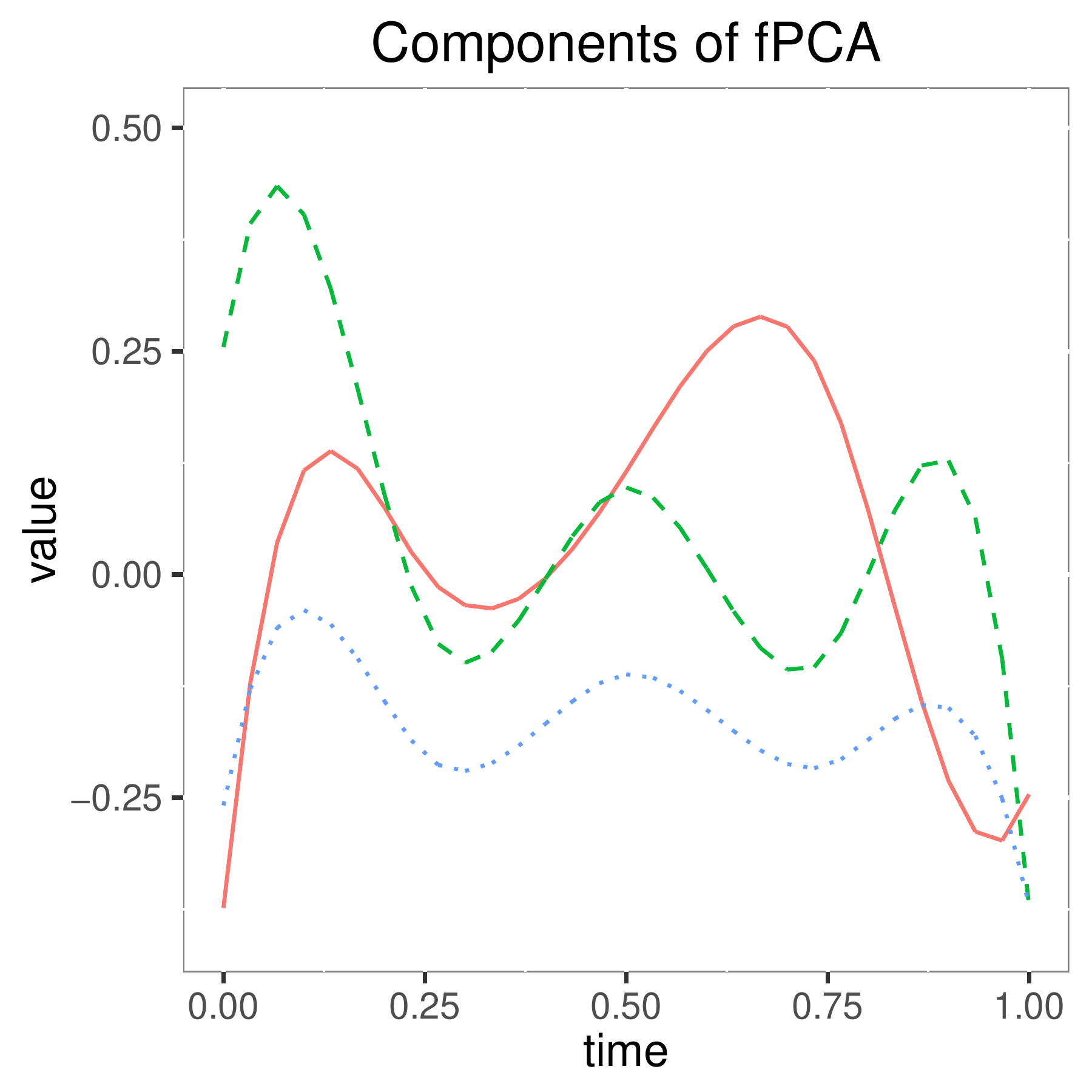}
  \includegraphics[width=0.49\linewidth]{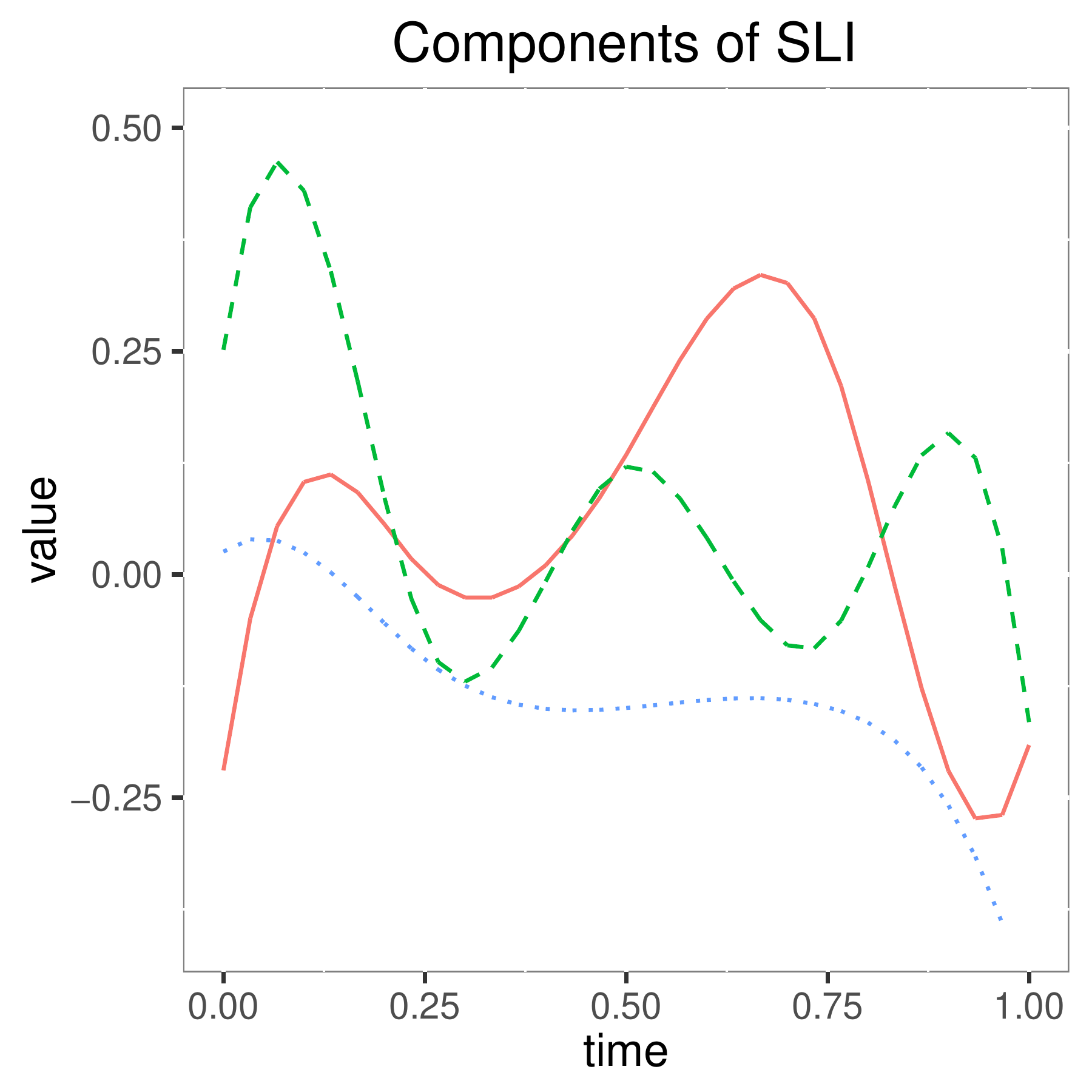}
  \caption{The first three principal components derived using sparse functional PCA (left) and Soft-Longitudinal-Impute (right) on the simulated data. The components are ordered as follows: 1st red solid curve, 2nd green dashed curve, 3rd blue dotted line. As expected, estimates of components are very similar.}
  \label{fig:principal-components}
\end{figure}

%% \begin{table}[ht]
%% \centering
%% \begin{tabular}{rrr}
%%   \hline
%%  & MSE & \% std \\ 
%%   \hline
%%   fPCA & 0.124 & 0.03 \\ 
%%   SLI & 0.121 & 0.03 \\ 
%% %  SLR & 0.064 & 0.03 \\ 
%%    \hline
%% \end{tabular}
%% \caption{Average MSE and standard deviation across 10 trials in two methods (1) functional principal components (fPCA) and (2) Sparse-Longitudinal-Impute (SLI)} % and (3) Sparse-Longitudinal-Regression (SLR) with extra data available.}
%% \label{tbl:simulations}
%% \end{table}

\begin{figure}[ht!]
  \includegraphics[width=0.49\linewidth]{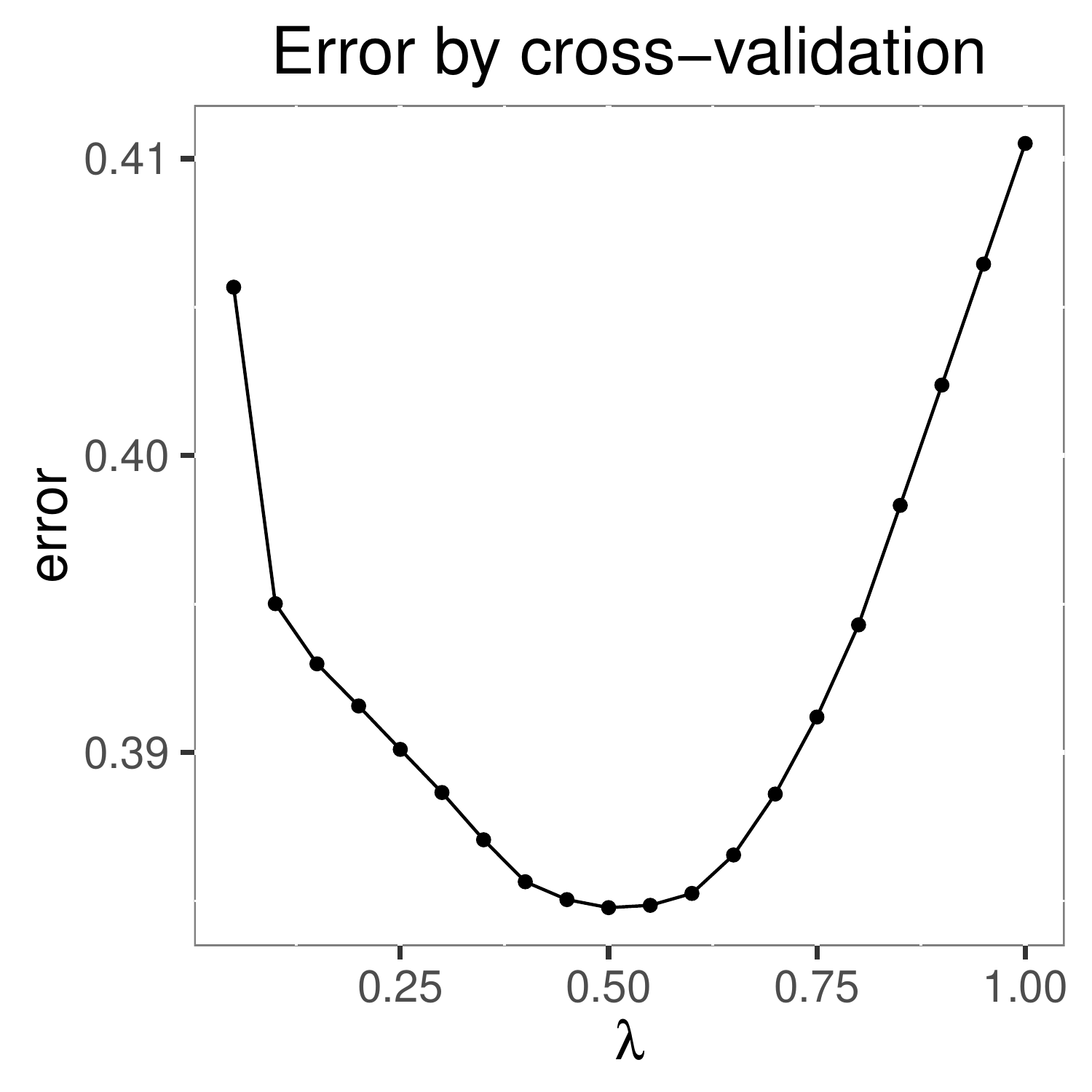}
  \includegraphics[width=0.49\linewidth]{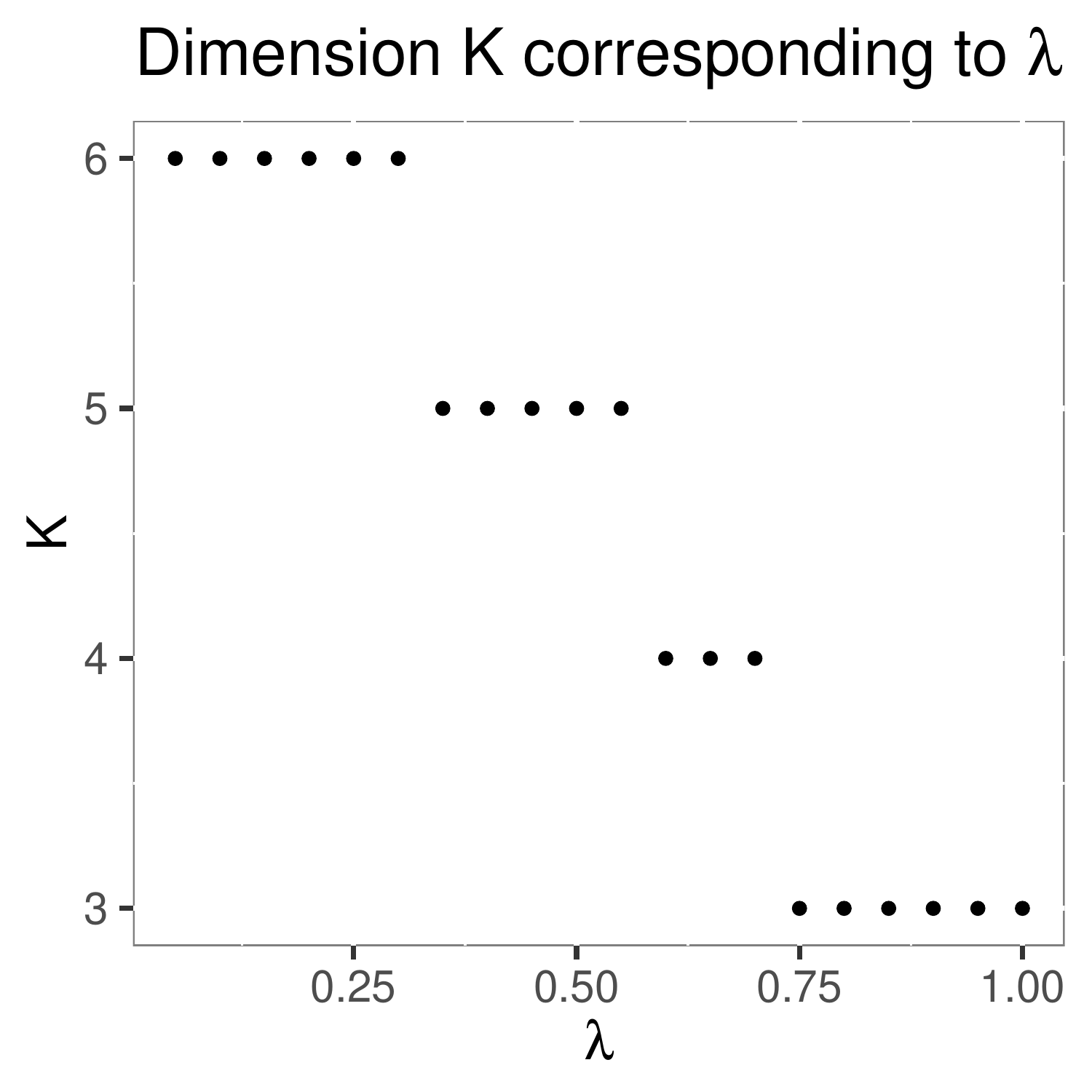}
  \caption{Our regularized procedure requires choice of the tuning parameter $\lambda$. This can be done using cross-validation where we optimize the mean-squared error of predictions on the held-out dataset. We plot the estimated error of the solution as a function of $\lambda$ in one of the simulations (left) and the estimated rank of the solution depending on the parameter $\lambda$ (right).}
  \label{fig:estimated-rank}
\end{figure}

% \begin{figure}[ht!]
%   \includegraphics[width=0.47\linewidth]{images/ours-smpl}
%   \includegraphics[width=0.47\linewidth]{images/fpca-smpl}\\
%   \includegraphics[width=0.47\linewidth]{images/ours}
%   \includegraphics[width=0.47\linewidth]{images/fpca}
%   \caption{Top: noisy samples, Bottom: real curves, Left: ours, Right: fPCA}
%   \label{fig:basis}
% \end{figure}

\section{Data study}\label{s:data-study}

We present an application of our methods for understanding how longitudinal changes of gait patterns relate to subtypes of neurological disorders. First, we discuss how practitioners collect the data and use them to guide the decision process. Next, we describe our dataset and present how our methodology can improve current workflows.

In clinical gait analysis, at each visit movement of a child is recorded using optical motion capture. Optical motion capture allows estimating 3D positions of body parts using a set of cameras tracking markers positions on the subject's body. A set of at least three markers is placed at each analyzed body segment so that its 3D position and orientation can be identified uniquely. These data are then used to determine relative positions of body segments by computing the angle between the corresponding planes. Typically it is done using a biomechanical model for enforcing biomechanical constraints and improving accuracy.

In gait analysis practitioners usually analyze movement pattern of seven joints in lower limbs: ankle, knee, hip in each leg, and pelvis (Figure \ref{fig:joint-angles}). Each joint angle is measured in time. For making the curves comparable between the patients, usually, the time dimension is normalized to the percentage of the gait cycle, defined as the time between two foot strikes (Figure \ref{fig:joint-angles-in-time}).

While trajectories of joint angles are a piece of data commonly used by practitioners for taking decisions regarding treatment, their high-dimensional nature hinders their use as a quantitative metric of gait pathology or treatment outcome. This motivates development of univariate summary metrics of gait impairment, such as questionnaire-based metrics Gillette Functional Assessment Walking Scale (FAQ) \citep{gorton2011gillette}, Gross Motor Function Classification System (GMFCS) \citep{palisano2008content} and Functional Mobility Scale (FMS) \citep{graham2004functional}, or observational video analysis scores such as Edinburgh Gait Score \citep{read2003edinburgh}.

One of the most widely adopted quantitative measurements of gait impairments in pediatrics is gait deviation index (GDI) \citep{schwartz2008gait}. GDI is derived from joint angle trajectories and measures deviation of the first ten singular values from the population average of the typically developing population. GDI is normalized in such a way that $100$ corresponds to the mean value of typically developing children, with the standard deviation equal $10$. It is observed to be highly correlated with questionnaire-based methods. Thanks to its deterministic derivation from the motion capture measurements this method is considered more objective than questionnaires.

%\afterpage{\clearpage}
\begin{figure}[p]
\centering
  \includegraphics[width=0.83\linewidth]{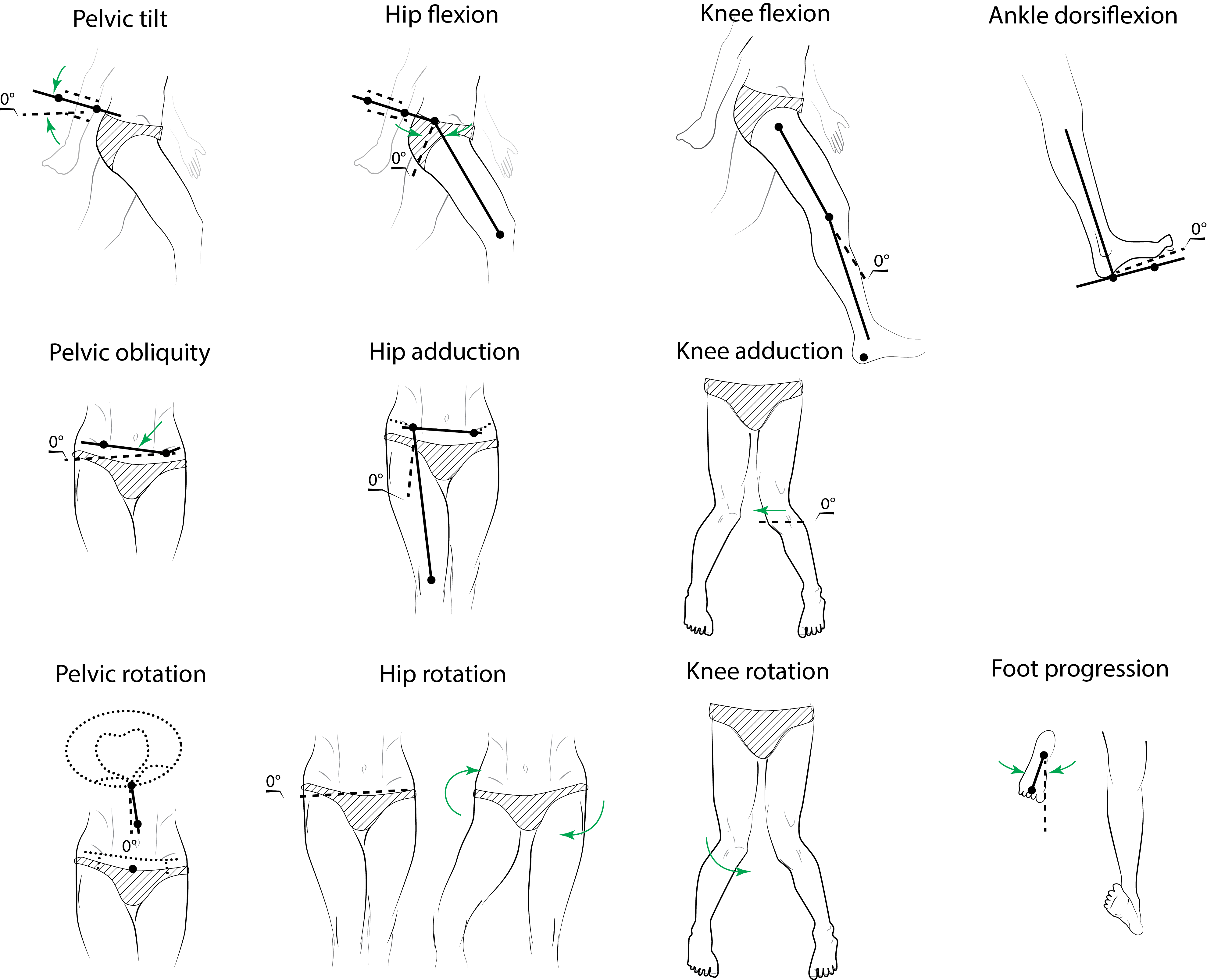}
  \caption{Four joints measured in clinical gait analysis: pelvis, hip, knee, and ankle. Each joint can be measured in three planes: sagittal plane (top row), frontal plate (middle row), and transverse plane (bottom row).}
    \label{fig:joint-angles}
%  \vskip 0.5cm
  \includegraphics[width=0.83\linewidth]{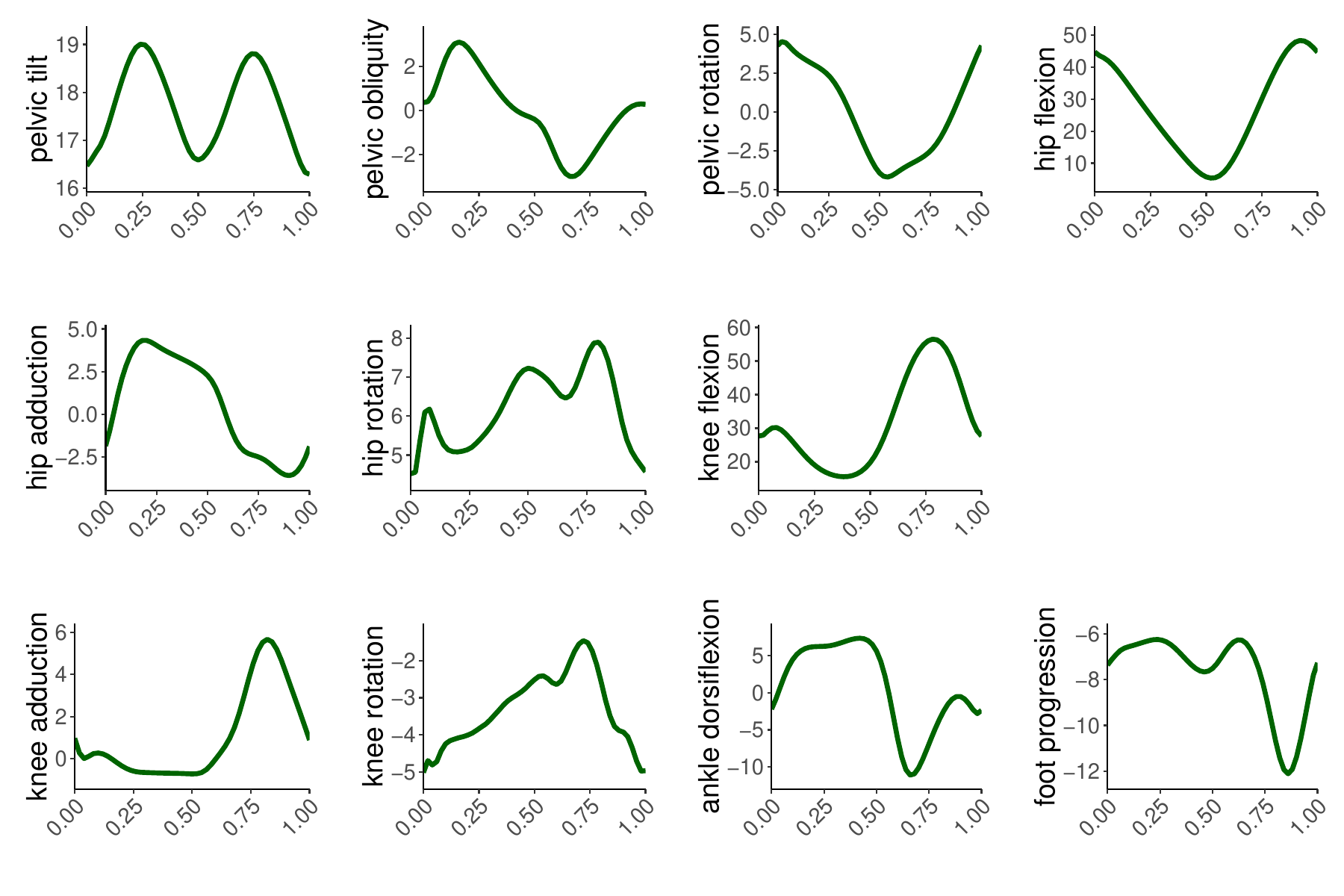}
  \caption{Example recordings of joint angles during the gait cycle of a single subject in one of the trials in our dataset. Each horizontal axis represents a fraction of the gait cycle.}
    \label{fig:joint-angles-in-time}
\end{figure}

%These signals are then converted to trajectories of joint angles using biomechanical models. Finally, practitioners extract summary statistics of gait, which are then used for clinical decisions.

In medical practice, GDI has been adapted as a metric for diagnosing the severity of impairment, and it constitutes an integral part of the clinical decision making process and evaluation of treatment outcomes. However, in order to correctly identify the surgery outcome, it is crucial to understand the natural progression of GDI. In particular, a positive outcome of a surgery might be negligible when compared to natural improvement during puberty. Similarly, a surgery maintaining the same level of GDI might be incorrectly classified as a failure, if the decline in patient's function over time is not accounted for.

%Therefore, a model of its progression can an essential baseline for analyzing effects of treatments -- an accurate prediction of a trajectory can help practitioners understand if an improvement after therapy is due to the treatment or purely a result of natural progression.

Methods introduced in this article can be used to approximate individual progressions of GDI. First, we present how a prediction can be made solely based on the patient's GDI history and histories of other patients. Next, using our regression procedure, we predict GDI trajectories using other sparsely observed covariates, namely O$_2$ expenditure and walking speed.

\subsection{Materials and methods}

We analyze a dataset of Gillette Children's Hospital patients visiting the clinic between 1994 and 2014, age ranging between 4 and 19 years, mostly diagnosed with Cerebral Palsy. The dataset contains $84$ visits of $36$ patients without gait disorders and $6066$ visits of $2898$ patients with gait pathologies.

Motion capture data was collected at 120Hz and joint angles in time were extracted. These joint angles were then normalized in time to the gait cycle, resulting in curves as in Figure \ref{fig:joint-angles-in-time}. Points from these curves were then subsampled (51 equidistributed points) \tr{for the downstream analysis established in the clinic}.

In the dataset which we received from the hospital, for each patient we know their birthday and disease subtype. From each visit, we observe the following variables: patient ID, time of the visit, GDI of the left leg, GDI of the right leg, walking speed, and O$_2$ expenditure. Other clinical variables that we received were not included in this study. Walking speed is related to information we lose during normalization of the gait cycle in time. O$_2$ expenditure is a measure of a subject's energy expenditure during walking. Pathological gait is often energy inefficient and reduction of O$_2$ expenditure is one of the objectives of treatments. Finally, GDI is computed for two legs while in many cases the neurological disorder affects only one limb. To simplify the analysis, we focus on the more impaired limb by analyzing the minimum of the left and the right GDI.

Our objective is to model individual progression curves. We test two methods: functional principal components (fPCA) and \textsc{Soft-Longitudinal-Impute} (SLI). We compare the results to the \emph{null model} -- the population mean across all visits (\verb|mean|). In SLR, we approximate GDI using latent variables of sparsely observed covariates \textit{O$_2$ expenditure} and \textit{walking speed}, following the methodology from Section \ref{ss:regression}.

Let us denote the test set as $\Omega \subset \{1,2,...,N\} \times \{1,2,...,T\}$. We validate each model $M$ on held-out indices by computing the mean squared error as defined in \eqref{eq:err}. We select the parameters of each of the three methods using cross-validation, using the same validation set.

In our evaluation procedure, for the test set, we randomly select $5\%$ of observations of patients who visited the clinic at least $4$ times. Then, we split the remaining $95\%$ of observations into a training and validation sets in $90:10$ proportion. We train the algorithms with the following combinations of parameters: the regularization parameter for SLI and SLR procedures $\lambda \in \{0, 0.1, 0.2, ..., 2.0\}$ and the rank for fPCA procedure $d \in \{2,3,4,...,K\}$. We define the grid of $T = 51$ points. We repeat the entire evaluation procedure $20$ times.

\begin{table}[ht]
  \centering
\begin{tabular}{rrr}
  \hline
 & mean & sd \\ 
  \hline
  fPCA & 76.4 & 11.9\\
  SLI & 74.7 & 7.5\\
%  SLR & 0.68 & 0.08 \\
   \hline
\end{tabular}
\caption{Distribution of cross-validated MSE of the two methods: functional principal components (fPCA) and Soft-Longitudinal-Impute (SLI).} %We include SLR only for illustration, as it had access to additional information, namely two predictors: O$_2$ expenditure and walking speed (SLR). }
\label{tbl:data-res}
\end{table}

\subsection{Results}\label{ss:results}

Compared to the null model, fPCA and SLI explain around $\sim 30\%$ of the variance. We present detailed results in Table \ref{tbl:data-res}. SLR using additional predictors, O$_2$ expenditure and walking speed, yielded mean MSE of $0.68$ with standard devaition $0.8$. We conclude that O$_2$ expenditure and walking speed provide additional information for prediction of GDI progression.

\begin{figure}[ht!]
  \includegraphics[width=0.9\linewidth]{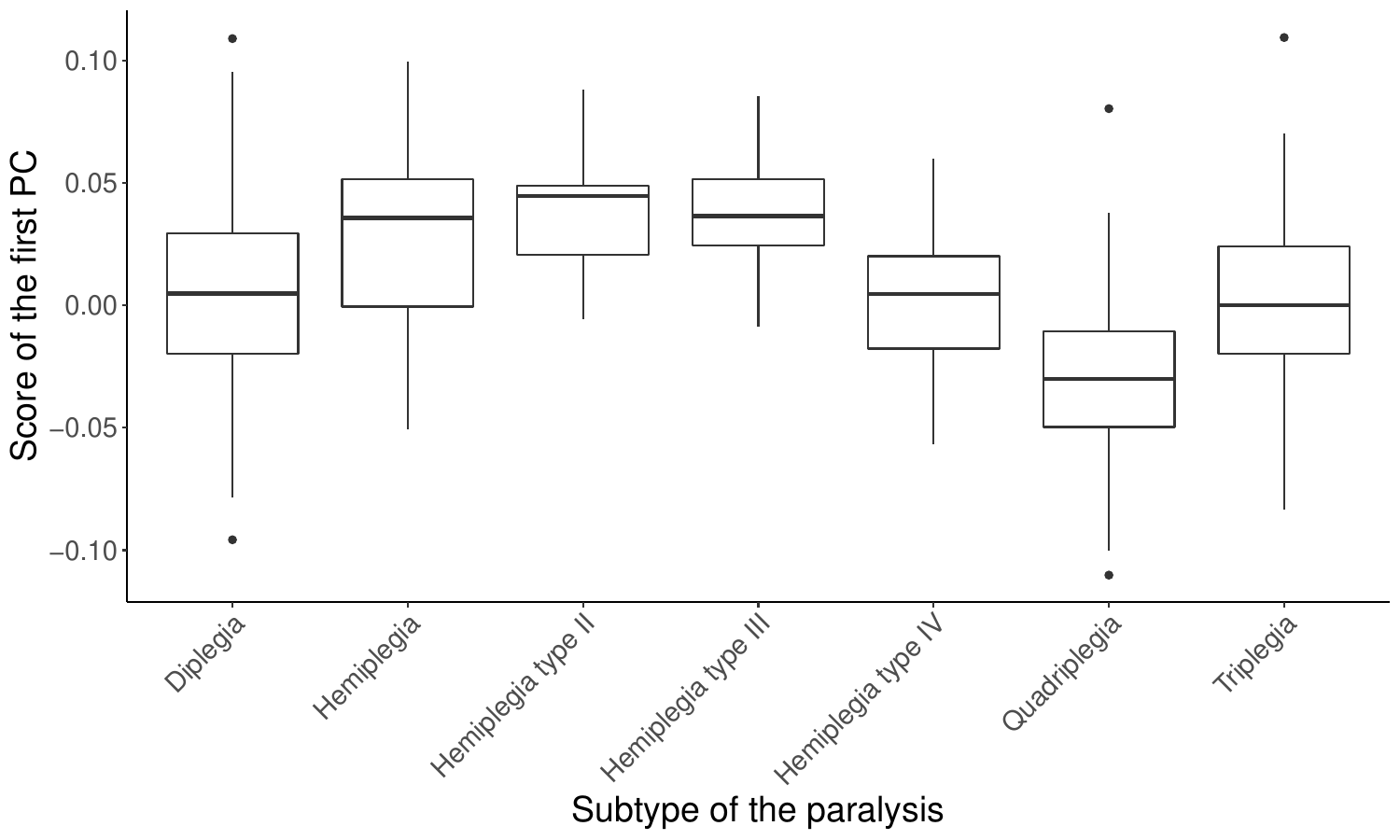}
\caption{Progression trends in different subsets of diseases. Negative values of the score, such as most of the quadriplegic group, correspond to individual trends where the first component (the red curve Figure \ref{fig:data-components} left) is subtracted from the mean (the green curve in Figure \ref{fig:data-components} right). Positive values of the score, such as most of the hemiplegic group, correspond to individual trends where the first component is added (the red curve in Figure \ref{fig:data-components} right).}
    \label{fig:subtypes}
\end{figure}

Both fPCA and \textsc{Sparse-Longitudinal-Impute} provide latent representations of patients' progression curves. We analyze the singular value vectors from our SVD solution which we refer to as principal components. In the left plot in Figure \ref{fig:data-components} we show the first two estimated principal components. We found that the first component estimates the change between GDI before and after age of 20. The second component models changes around age of 10 and around age of 18. In the right \tr{plot} in Figure \ref{fig:data-components}, by adding a principle component to the population mean curve, we illustrate how differences in the first component are reflected in the patients trajectory. By visual investigation of curves returned by our \textsc{Sparse-Longitudinal-Impute} and by fPCA we found similar trends in the first two components.

Since our SVD decomposition defines a low-rank representation of the progression trends, we can also use it to gain insights on progression in different groups of patients. In Cerebral Palsy we divide paralysis into subtypes depending on which limbs are affected: monolegia (one leg), diplegia (two legs), hemiplegia (one side of the body), triplegia (three limbs), quadriplegia (four limbs). Hemiplegia is the most prevalent in our population and it might be divided depending on severity, from type I (weak muscles, drop foot) to type IV (severe spasticity). We find differences between trends of progression for different subtypes of paralysis of patients ($F_{6,541} = 17.17, p < 10^{-15}$). We illustrate these distributions in Figure \ref{fig:subtypes}.

\section{Discussion}\label{s:discussion}

Results presented in Section \ref{ss:results} imply that our \textsc{Sparse-Longitudinal-Impute} and \textsc{Sparse-Longitudinal-Regression} methods can be successfully applied to understands trends of variability of disease progression. We show how to incorporate progressions of O$_2$ expenditure and walking speed in the prediction of the progression of GDI. We present how low-rank representation can be leveraged to gain insights about subtypes of impairment.

While a large portion of variance remains unexplained, it is important to note that in practice the individual progression is not accounted for explicitly in the current decision-making process. Instead, practitioners only use the population-level characteristics of the dependence between age and impairment severity. Our model can greatly improve this practice.

Despite successful application, we identify limitations that could be potentially addressed in the extensions of our model. First, the method is meant to capture natural continuous progression of GDI, while in practice there are many discrete events, such as surgeries that break continuity assumption and render the mean trajectories less interpretable. Second, our methodology does not address the ``cold start problem'', i.e. we do not provide tools for predictions with only one or zero observations. Third, we do not provide explicit equations for confidence bounds of predicted parameters.

While these and other limitations can constrain applicability of the method in the current form, they can be addressed using existing techniques of matrix completion. The focus of this paper is to introduce a computational framework rather than build a full solution for all cases. Elementary formulation of the optimization problem as well as the fully-functional \verb|R| implementation can foster development of new tools using matrix completion for longitudinal analysis and for mixed-effect models.

In our \verb|R| package \verb|fcomplete| available at \url{https://github.com/kidzik/fcomplete}, we provide implementations of all algorithms described in this article as well as helper functions for transforming the data, sampling training and test datasets, and plotting functions. For convenience, we also provided an interface for using the \verb|fpca| package implementing Sparse Functional Principal Components algorithms \citep{james2000principal,peng2009geometric}. The analysis was perform on a desktop PC with 64 GB RAM memmory and an Intel\textsuperscript{\textregistered} Core\textsuperscript{\texttrademark} Intel(R) Core(TM) i9-10900K CPU @ 3.70GHz, operating on a Ubuntu 18.04 system with \verb|R| version 4.1.0.

\section{Acknowledgements}
Łukasz Kidziński was supported by the Mobilize Center grant U54 EB020405 from the National Institute of Health. Trevor J. Hastie was partially supported by grants DMS-2013736 And IIS, 1837931 from the National Science Foundation, and grant 5R01 EB, 001988-21 from the National Institutes of Health.

% \end{figure}

% \section{Other applications}
% Some other cases where the structure can be available.
% \begin{enumerate}
% \item Images?
% \item General applications of low-rank methods: clustering
% \end{enumerate}

% \section{Discussion}
\renewcommand*{\bibfont}{\small}
\bibliography{report}

\begin{thebibliography}{65}
\newcommand{\enquote}[1]{``#1''}
\providecommand{\natexlab}[1]{#1}
\providecommand{\url}[1]{\texttt{#1}}
\providecommand{\urlprefix}{URL }
\expandafter\ifx\csname urlstyle\endcsname\relax
  \providecommand{\doi}[1]{doi:\discretionary{}{}{}#1}\else
  \providecommand{\doi}{doi:\discretionary{}{}{}\begingroup
  \urlstyle{rm}\Url}\fi
\providecommand{\eprint}[2][]{\url{#2}}

\bibitem[{Adcock(1878)}]{adcock1878problem}
Adcock RJ (1878).
\newblock \enquote{A problem in least squares.}
\newblock \emph{The Analyst}, \textbf{5}(2), 53--54.

\bibitem[{Bennett and Lanning(2007)}]{bennett2007netflix}
Bennett J, Lanning S (2007).
\newblock \enquote{Netflix: the Netflix prize.}
\newblock In \emph{KDD Cup and Workshop in conjunction with KDD}.

\bibitem[{Berkey and Kent(1983)}]{berkey1983longitudinal}
Berkey C, Kent R (1983).
\newblock \enquote{Longitudinal principal components and non-linear regression
  models of early childhood growth.}
\newblock \emph{Annals of human biology}, \textbf{10}(6), 523--536.

\bibitem[{Besse and Ramsay(1986)}]{besse1986principal}
Besse P, Ramsay JO (1986).
\newblock \enquote{Principal components analysis of sampled functions.}
\newblock \emph{Psychometrika}, \textbf{51}(2), 285--311.

\bibitem[{Besse \emph{et~al.}(1997)Besse, Cardot, and
  Ferraty}]{besse1997simultaneous}
Besse PC, Cardot H, Ferraty F (1997).
\newblock \enquote{Simultaneous non-parametric regressions of unbalanced
  longitudinal data.}
\newblock \emph{Computational Statistics \& Data Analysis}, \textbf{24}(3),
  255--270.

\bibitem[{Bigelow and Dunson(2009)}]{bigelow2009bayesian}
Bigelow JL, Dunson DB (2009).
\newblock \enquote{Bayesian semiparametric joint models for functional
  predictors.}
\newblock \emph{Journal of the American Statistical Association},
  \textbf{104}(485), 26--36.

\bibitem[{Borga \emph{et~al.}(1997)Borga, Landelius, and
  Knutsson}]{borga1997unified}
Borga M, Landelius T, Knutsson H (1997).
\newblock \emph{A unified approach to pca, pls, mlr and cca}.
\newblock Link{\"o}ping University, Department of Electrical Engineering.

\bibitem[{Cai \emph{et~al.}(2010)Cai, Cand{\`e}s, and Shen}]{cai2010singular}
Cai JF, Cand{\`e}s EJ, Shen Z (2010).
\newblock \enquote{A singular value thresholding algorithm for matrix
  completion.}
\newblock \emph{SIAM Journal on Optimization}, \textbf{20}(4), 1956--1982.

\bibitem[{Cand{\`e}s and Recht(2009)}]{candes2009exact}
Cand{\`e}s EJ, Recht B (2009).
\newblock \enquote{Exact matrix completion via convex optimization.}
\newblock \emph{Foundations of Computational mathematics}, \textbf{9}(6), 717.

\bibitem[{Chen and Wainwright(2015)}]{chen2015fast}
Chen Y, Wainwright MJ (2015).
\newblock \enquote{Fast low-rank estimation by projected gradient descent:
  General statistical and algorithmic guarantees.}
\newblock \emph{arXiv preprint arXiv:1509.03025}.

\bibitem[{Cnaan \emph{et~al.}(1997)Cnaan, Laird, and
  Slasor}]{cnaan1997tutorial}
Cnaan A, Laird N, Slasor P (1997).
\newblock \enquote{Tutorial in biostatistics: using the general linear mixed
  model to analyse unbalanced repeated measures and longitudinal data.}
\newblock \emph{Stat Med}, \textbf{16}(2349), 80.

\bibitem[{Condli \emph{et~al.}(1999)Condli, Lewis, Madigan, and
  Posse}]{condli1999bayesian}
Condli MK, Lewis DD, Madigan D, Posse C (1999).
\newblock \enquote{Bayesian Mixed-Effects Models for Recommender Systems.}
\newblock In \emph{ACM SIGIR}, volume~99.

\bibitem[{Descary and Panaretos(2016)}]{descary2016functional}
Descary MH, Panaretos VM (2016).
\newblock \enquote{Functional data analysis by matrix completion.}
\newblock \emph{arXiv preprint arXiv:1609.00834}.

\bibitem[{Diggle \emph{et~al.}(2002)Diggle, Heagerty, Liang, and
  Zeger}]{diggle2002analysis}
Diggle P, Heagerty P, Liang KY, Zeger S (2002).
\newblock \emph{Analysis of longitudinal data}.
\newblock Oxford University Press.

\bibitem[{Fithian \emph{et~al.}(2018)Fithian, Mazumder
  \emph{et~al.}}]{fithian2018flexible}
Fithian W, Mazumder R, \emph{et~al.} (2018).
\newblock \enquote{Flexible low-rank statistical modeling with missing data and
  side information.}
\newblock \emph{Statistical Science}, \textbf{33}(2), 238--260.

\bibitem[{Friedman \emph{et~al.}(2001)Friedman, Hastie, and
  Tibshirani}]{friedman2001elements}
Friedman J, Hastie T, Tibshirani R (2001).
\newblock \emph{The elements of statistical learning}, volume~1.
\newblock Springer series in statistics New York.

\bibitem[{Ge \emph{et~al.}(2016)Ge, Lee, and Ma}]{ge2016matrix}
Ge R, Lee JD, Ma T (2016).
\newblock \enquote{Matrix completion has no spurious local minimum.}
\newblock In \emph{Advances in Neural Information Processing Systems}, pp.
  2973--2981.

\bibitem[{Gleser(1981)}]{gleser1981estimation}
Gleser LJ (1981).
\newblock \enquote{Estimation in a multivariate" errors in variables"
  regression model: large sample results.}
\newblock \emph{The Annals of Statistics}, pp. 24--44.

\bibitem[{Gleser and Watson(1973)}]{gleser1973estimation}
Gleser LJ, Watson GS (1973).
\newblock \enquote{Estimation of a linear transformation.}
\newblock \emph{Biometrika}, \textbf{60}(3), 525--534.

\bibitem[{Gorton~III \emph{et~al.}(2011)Gorton~III, Stout, Bagley, Bevans,
  Novacheck, and Tucker}]{gorton2011gillette}
Gorton~III GE, Stout JL, Bagley AM, Bevans K, Novacheck TF, Tucker CA (2011).
\newblock \enquote{Gillette Functional Assessment Questionnaire 22-item skill
  set: factor and Rasch analyses.}
\newblock \emph{Developmental Medicine \& Child Neurology}, \textbf{53}(3),
  250--255.

\bibitem[{Graham \emph{et~al.}(2004)Graham, Harvey, Rodda, Nattrass, and
  Pirpiris}]{graham2004functional}
Graham HK, Harvey A, Rodda J, Nattrass GR, Pirpiris M (2004).
\newblock \enquote{The functional mobility scale (FMS).}
\newblock \emph{Journal of Pediatric Orthopaedics}, \textbf{24}(5), 514--520.

\bibitem[{Gray and Brookmeyer(1998)}]{gray1998estimating}
Gray SM, Brookmeyer R (1998).
\newblock \enquote{Estimating a treatment effect from multidimensional
  longitudinal data.}
\newblock \emph{Biometrics}, pp. 976--988.

\bibitem[{Gray and Brookmeyer(2000)}]{gray2000multidimensional}
Gray SM, Brookmeyer R (2000).
\newblock \enquote{Multidimensional longitudinal data: estimating a treatment
  effect from continuous, discrete, or time-to-event response variables.}
\newblock \emph{Journal of the American Statistical Association},
  \textbf{95}(450), 396--406.

\bibitem[{Greven \emph{et~al.}(2011)Greven, Crainiceanu, Caffo, and
  Reich}]{greven2011longitudinal}
Greven S, Crainiceanu C, Caffo B, Reich D (2011).
\newblock \enquote{Longitudinal functional principal component analysis.}
\newblock \emph{Recent Advances in Functional Data Analysis and Related
  Topics}, pp. 149--154.

\bibitem[{Hall \emph{et~al.}(2006)Hall, M{\"u}ller, and
  Wang}]{hall2006properties}
Hall P, M{\"u}ller HG, Wang JL (2006).
\newblock \enquote{Properties of principal component methods for functional and
  longitudinal data analysis.}
\newblock \emph{The annals of statistics}, pp. 1493--1517.

\bibitem[{Hardt and Wootters(2014)}]{hardt2014fast}
Hardt M, Wootters M (2014).
\newblock \enquote{Fast matrix completion without the condition number.}
\newblock In \emph{Conference on Learning Theory}, pp. 638--678.

\bibitem[{Hastie \emph{et~al.}(2015{\natexlab{a}})Hastie, Mazumder, Lee, and
  Zadeh}]{hastie2015matrix}
Hastie T, Mazumder R, Lee JD, Zadeh R (2015{\natexlab{a}}).
\newblock \enquote{Matrix completion and low-rank SVD via fast alternating
  least squares.}
\newblock \emph{Journal of Machine Learning Research}, \textbf{16}, 3367--3402.

\bibitem[{Hastie \emph{et~al.}(2015{\natexlab{b}})Hastie, Tibshirani, and
  Wainwright}]{hastie2015statistical}
Hastie T, Tibshirani R, Wainwright M (2015{\natexlab{b}}).
\newblock \emph{Statistical learning with sparsity: the lasso and
  generalizations}.
\newblock CRC press.

\bibitem[{Henderson(1950)}]{henderson1950estimation}
Henderson CR (1950).
\newblock \enquote{Estimation of genetic parameters.}
\newblock In \emph{Biometrics}, volume~6, pp. 186--187. International biometric
  soc.

\bibitem[{Horv{\'a}th and Kokoszka(2012)}]{horvath2012inference}
Horv{\'a}th L, Kokoszka P (2012).
\newblock \emph{Inference for functional data with applications}, volume 200.
\newblock Springer Science \& Business Media.

\bibitem[{James \emph{et~al.}(2000)James, Hastie, and
  Sugar}]{james2000principal}
James GM, Hastie TJ, Sugar CA (2000).
\newblock \enquote{Principal component models for sparse functional data.}
\newblock \emph{Biometrika}, pp. 587--602.

\bibitem[{Kim and Mueller(1978)}]{kim1978factor}
Kim JO, Mueller CW (1978).
\newblock \emph{Factor analysis: Statistical methods and practical issues},
  volume~14.
\newblock Sage.

\bibitem[{Kosambi(2016)}]{kosambi2016statistics}
Kosambi D (2016).
\newblock \enquote{Statistics in function space.}
\newblock In \emph{DD Kosambi}, pp. 115--123. Springer.

\bibitem[{Laird(1988)}]{laird1988missing}
Laird NM (1988).
\newblock \enquote{Missing data in longitudinal studies.}
\newblock \emph{Statistics in medicine}, \textbf{7}(1-2), 305--315.

\bibitem[{Laird and Ware(1982)}]{laird1982random}
Laird NM, Ware JH (1982).
\newblock \enquote{Random-effects models for longitudinal data.}
\newblock \emph{Biometrics}, pp. 963--974.

\bibitem[{Larsen(2004)}]{larsen2004propack}
Larsen RM (2004).
\newblock \enquote{PROPACK-Software for large and sparse SVD calculations.}
\newblock \emph{Available online. URL http://sun. stanford. edu/rmunk/PROPACK},
  pp. 2008--2009.

\bibitem[{Lawrence(2004)}]{lawrence2004gaussian}
Lawrence ND (2004).
\newblock \enquote{Gaussian process latent variable models for visualisation of
  high dimensional data.}
\newblock In \emph{Advances in neural information processing systems}, pp.
  329--336.

\bibitem[{Liu and Huang(2009)}]{liu2009joint}
Liu L, Huang X (2009).
\newblock \enquote{Joint analysis of correlated repeated measures and recurrent
  events processes in the presence of death, with application to a study on
  acquired immune deficiency syndrome.}
\newblock \emph{Journal of the Royal Statistical Society: Series C (Applied
  Statistics)}, \textbf{58}(1), 65--81.

\bibitem[{Ma \emph{et~al.}(2011)Ma, Goldfarb, and Chen}]{ma2011fixed}
Ma S, Goldfarb D, Chen L (2011).
\newblock \enquote{Fixed point and Bregman iterative methods for matrix rank
  minimization.}
\newblock \emph{Mathematical Programming}, \textbf{128}(1), 321--353.

\bibitem[{MacLehose and Dunson(2009)}]{maclehose2009nonparametric}
MacLehose RF, Dunson DB (2009).
\newblock \enquote{Nonparametric Bayes kernel-based priors for functional data
  analysis.}
\newblock \emph{Statistica Sinica}, pp. 611--629.

\bibitem[{Mardia \emph{et~al.}(1980)Mardia, Kent, and
  Bibby}]{mardia1980multivariate}
Mardia KV, Kent JT, Bibby JM (1980).
\newblock \enquote{Multivariate analysis (probability and mathematical
  statistics).}

\bibitem[{Mazumder \emph{et~al.}(2010)Mazumder, Hastie, and
  Tibshirani}]{mazumder2010spectral}
Mazumder R, Hastie T, Tibshirani R (2010).
\newblock \enquote{Spectral regularization algorithms for learning large
  incomplete matrices.}
\newblock \emph{Journal of machine learning research}, \textbf{11}(Aug),
  2287--2322.

\bibitem[{McCulloch and Neuhaus(2001)}]{mcculloch2001generalized}
McCulloch CE, Neuhaus JM (2001).
\newblock \emph{Generalized linear mixed models}.
\newblock Wiley Online Library.

\bibitem[{Nesterov(2013)}]{nesterov2013gradient}
Nesterov Y (2013).
\newblock \enquote{Gradient methods for minimizing composite functions.}
\newblock \emph{Mathematical Programming}, \textbf{140}(1), 125--161.

\bibitem[{Palisano \emph{et~al.}(2008)Palisano, Rosenbaum, Bartlett, and
  Livingston}]{palisano2008content}
Palisano RJ, Rosenbaum P, Bartlett D, Livingston MH (2008).
\newblock \enquote{Content validity of the expanded and revised Gross Motor
  Function Classification System.}
\newblock \emph{Developmental Medicine \& Child Neurology}, \textbf{50}(10),
  744--750.

\bibitem[{Peng and Paul(2009)}]{peng2009geometric}
Peng J, Paul D (2009).
\newblock \enquote{A geometric approach to maximum likelihood estimation of the
  functional principal components from sparse longitudinal data.}
\newblock \emph{Journal of Computational and Graphical Statistics},
  \textbf{18}(4), 995--1015.

\bibitem[{Ramsay and Dalzell(1991)}]{ramsay1991some}
Ramsay JO, Dalzell C (1991).
\newblock \enquote{Some tools for functional data analysis.}
\newblock \emph{Journal of the Royal Statistical Society. Series B
  (Methodological)}, pp. 539--572.

\bibitem[{Read \emph{et~al.}(2003)Read, Hazlewood, Hillman, Prescott, and
  Robb}]{read2003edinburgh}
Read HS, Hazlewood ME, Hillman SJ, Prescott RJ, Robb JE (2003).
\newblock \enquote{Edinburgh visual gait score for use in cerebral palsy.}
\newblock \emph{Journal of pediatric orthopaedics}, \textbf{23}(3), 296--301.

\bibitem[{Rennie and Srebro(2005)}]{rennie2005fast}
Rennie JD, Srebro N (2005).
\newblock \enquote{Fast maximum margin matrix factorization for collaborative
  prediction.}
\newblock In \emph{Proceedings of the 22nd international conference on Machine
  learning}, pp. 713--719. ACM.

\bibitem[{Rice and Wu(2001)}]{rice2001nonparametric}
Rice JA, Wu CO (2001).
\newblock \enquote{Nonparametric mixed effects models for unequally sampled
  noisy curves.}
\newblock \emph{Biometrics}, \textbf{57}(1), 253--259.

\bibitem[{Rizopoulos \emph{et~al.}(2014)Rizopoulos, Hatfield, Carlin, and
  Takkenberg}]{rizopoulos2014combining}
Rizopoulos D, Hatfield LA, Carlin BP, Takkenberg JJ (2014).
\newblock \enquote{Combining dynamic predictions from joint models for
  longitudinal and time-to-event data using Bayesian model averaging.}
\newblock \emph{Journal of the American Statistical Association},
  \textbf{109}(508), 1385--1397.

\bibitem[{Robinson(1991)}]{robinson1991blup}
Robinson GK (1991).
\newblock \enquote{That BLUP is a good thing: the estimation of random
  effects.}
\newblock \emph{Statistical science}, pp. 15--32.

\bibitem[{Sammel and Ryan(1996)}]{sammel1996latent}
Sammel MD, Ryan LM (1996).
\newblock \enquote{Latent variable models with fixed effects.}
\newblock \emph{Biometrics}, pp. 650--663.

\bibitem[{Schulam and Arora(2016)}]{schulam2016disease}
Schulam P, Arora R (2016).
\newblock \enquote{Disease Trajectory Maps.}
\newblock In \emph{Advances in Neural Information Processing Systems}, pp.
  4709--4717.

\bibitem[{Schwartz and Rozumalski(2008)}]{schwartz2008gait}
Schwartz MH, Rozumalski A (2008).
\newblock \enquote{The Gait Deviation Index: a new comprehensive index of gait
  pathology.}
\newblock \emph{Gait \& posture}, \textbf{28}(3), 351--357.

\bibitem[{Song \emph{et~al.}(2002)Song, Davidian, and
  Tsiatis}]{song2002semiparametric}
Song X, Davidian M, Tsiatis AA (2002).
\newblock \enquote{A semiparametric likelihood approach to joint modeling of
  longitudinal and time-to-event data.}
\newblock \emph{Biometrics}, \textbf{58}(4), 742--753.

\bibitem[{Srebro \emph{et~al.}(2005)Srebro, Alon, and
  Jaakkola}]{srebro2005generalization}
Srebro N, Alon N, Jaakkola TS (2005).
\newblock \enquote{Generalization error bounds for collaborative prediction
  with low-rank matrices.}
\newblock In \emph{Advances In Neural Information Processing Systems}, pp.
  1321--1328.

\bibitem[{Tibshirani(1996)}]{tibshirani1996regression}
Tibshirani R (1996).
\newblock \enquote{Regression shrinkage and selection via the lasso.}
\newblock \emph{Journal of the Royal Statistical Society. Series B
  (Methodological)}, pp. 267--288.

\bibitem[{Tipping and Bishop(1999)}]{tipping1999probabilistic}
Tipping ME, Bishop CM (1999).
\newblock \enquote{Probabilistic principal component analysis.}
\newblock \emph{Journal of the Royal Statistical Society: Series B (Statistical
  Methodology)}, \textbf{61}(3), 611--622.

\bibitem[{Verbeke(1997)}]{verbeke1997linear}
Verbeke G (1997).
\newblock \enquote{Linear mixed models for longitudinal data.}
\newblock In \emph{Linear mixed models in practice}, pp. 63--153. Springer.

\bibitem[{Watanabe(1965)}]{watanabe1965karhunen}
Watanabe S (1965).
\newblock \enquote{Karhunen-Loeve expansion and factor analysis.}
\newblock In \emph{Transactions of the 4th Prague Conference on Information
  Theory, Statistical Decision Functions, and Random Processes, Prague, 1965}.

\bibitem[{Yan \emph{et~al.}(2017)Yan, Lin, Huang
  \emph{et~al.}}]{yan2017dynamic}
Yan F, Lin X, Huang X, \emph{et~al.} (2017).
\newblock \enquote{Dynamic prediction of disease progression for leukemia
  patients by functional principal component analysis of longitudinal
  expression levels of an oncogene.}
\newblock \emph{The Annals of Applied Statistics}, \textbf{11}(3), 1649--1670.

\bibitem[{Yao and Lee(2006)}]{yao2006penalized}
Yao F, Lee T (2006).
\newblock \enquote{Penalized spline models for functional principal component
  analysis.}
\newblock \emph{Journal of the Royal Statistical Society: Series B (Statistical
  Methodology)}, \textbf{68}(1), 3--25.

\bibitem[{Yao \emph{et~al.}(2005)Yao, M{\"u}ller, Wang
  \emph{et~al.}}]{yao2005linear}
Yao F, M{\"u}ller HG, Wang JL, \emph{et~al.} (2005).
\newblock \enquote{Functional linear regression analysis for longitudinal
  data.}
\newblock \emph{The Annals of Statistics}, \textbf{33}(6), 2873--2903.

\bibitem[{Zeger \emph{et~al.}(1988)Zeger, Liang, and Albert}]{zeger1988models}
Zeger SL, Liang KY, Albert PS (1988).
\newblock \enquote{Models for longitudinal data: a generalized estimating
  equation approach.}
\newblock \emph{Biometrics}, pp. 1049--1060.

\end{thebibliography}
%\printbibliography

\appendix

\section{Proofs}\label{s:convergence}

We prove convergence by mapping our problem into the framework of \citet{mazumder2010spectral}.
%% First, we define operators
%% \[
%% P_{\Omega,B}(Y) = P_\Omega(Y)B' \text{ and } P_{\Omega,B}^\perp(Y) = P_\Omega^\perp(Y)B'
%% \]
%% Note that $P_{\Omega,B}$ is a linear operator and $P_{\Omega,B}(Y) + P_{\Omega,B}^\perp(Y) = YB'$. %, which is the only property of $P_\Omega$ required for directly applying proofs from \citep{mazumder2010spectral}. 
%% and
%% \begin{equation}
%%     \|P_{\Omega,B'}(Y) + P_{\Omega,B'}^\perp(Y)\|_F^2 = \|P_{\Omega,B'}(Y)\|_F^2 + \|P_{\Omega,B'}^\perp(Y)\|_F^2.
%% \end{equation} We define
Following their notation we define
\begin{align}\label{eq:helper-function}
 f_\lambda(W) = \frac{1}{2} \|P_{\Omega}(Y) - P_{\Omega}(WB')\|_F^2 + \lambda\|W\|_*.
\end{align}
Our objective is to find $ W_\lambda = {\arg\min}_W f_\lambda(W)$. We define
\begin{align}\label{eq:q}
Q_\lambda(W|\tilde{W}) = \frac{1}{2}\|P_{\Omega}(Y) + P_{\Omega}^{\perp}(\tilde{W} B') - WB'\|_F^2 + \lambda \|W\|_*.
\end{align}
%as a surrogate of $f_\lambda$.
Algorithm \ref{alg:soft-impute} in the step $k$ computes $W_\lambda^{k+1} = \argmin_W Q_\lambda(W|W_\lambda^k)$. We show that $W_\lambda^k$ converges to the solution of \eqref{eq:helper-function}.
\begin{lemma}\label{lemma:svt}
Let $W$ be an $N \times K$ matrix of rank $r \leq K$ and $B$ is an orthogonal $T \times K$ matrix. The solution to the optimization problem
\begin{equation}\label{eq:lemma1}
\min_W \frac{1}{2}\|Y - WB' \|_F^2 + \lambda\|W\|_*
\end{equation}
is given by $\hat{W} = S_\lambda(YB)$ where
\[
S_\lambda(YB) \equiv WD_\lambda V' \text{ with } D_\lambda = \diag[(d_1 - \lambda)_+, ..., (d_r - \lambda)_+],
\]
$WDV'$ is the SVD of $YB$, $D = \diag[d_1,...,d_r]$, and $t_+ = \max(t,0)$.
\end{lemma}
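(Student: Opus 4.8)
The plan is to reduce the problem to the fully observed nuclear-norm proximal problem solved by \citet{cai2010singular}, exploiting the orthogonality of $B$ to absorb the basis. The key observation is that, although $WB'$ rather than $W$ appears inside the Frobenius norm, right-multiplication by the orthonormal columns of $B$ turns the data-fit term into one measuring the discrepancy between $W$ and $YB$ directly, plus a term independent of $W$.

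Concretely, I would first extend $B$ to a full orthogonal matrix. Since $B$ is $T \times K$ with orthonormal columns, so that $B'B = I_K$, choose a $T \times (T-K)$ matrix $B_\perp$ whose columns complete $B$ to an orthonormal basis of $\R^T$; then $[\,B\; B_\perp\,]$ is orthogonal and $B'B_\perp = 0$. Because the Frobenius norm is invariant under right multiplication by an orthogonal matrix, I would write
\[
\|Y - WB'\|_F^2 = \big\|\,(Y - WB')[\,B\;B_\perp\,]\,\big\|_F^2 .
\]
Using $B'B = I_K$ one computes $(Y-WB')B = YB - W$, and using $B'B_\perp = 0$ one computes $(Y-WB')B_\perp = YB_\perp$. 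Hence the right-hand side splits as $\|YB - W\|_F^2 + \|YB_\perp\|_F^2$, where the second term does not involve $W$.

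Substituting this identity into \eqref{eq:lemma1}, the optimization over $W$ is equivalent, up to the additive constant $\tfrac12\|YB_\perp\|_F^2$, to
\[
\min_W \tfrac12\|YB - W\|_F^2 + \lambda\|W\|_* .
\]
This is exactly the proximal problem for the nuclear norm evaluated at the matrix $YB$. By Theorem 2.1 of \citet{cai2010singular}, it has the unique minimizer $S_\lambda(YB) = U D_\lambda V'$, where $UDV'$ is the SVD of $YB$ (I would use $U$ for the left singular vectors to avoid the notational clash with the optimization variable $W$) and $D_\lambda$ soft-thresholds the singular values. This is precisely the claimed form of $\hat W$.

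The steps are all routine and I do not expect a genuine obstacle. The only point requiring a little care is justifying the orthogonal completion and verifying that the off-block terms vanish, i.e.\ that the decoupling $\|(Y-WB')[\,B\;B_\perp\,]\|_F^2 = \|YB-W\|_F^2 + \|YB_\perp\|_F^2$ is exact, which follows from $B'B_\perp = 0$. I would also remark that the uniqueness claim is inherited directly from the Cai et al.\ result, since $W \mapsto \tfrac12\|YB - W\|_F^2 + \lambda\|W\|_*$ is strictly convex in $W$ and differs from the original objective only by a constant.
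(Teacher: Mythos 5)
Your proposal is correct and follows essentially the same route as the paper: both use the orthogonality of $B$ to absorb the basis and reduce \eqref{eq:lemma1} to the fully observed nuclear-norm proximal problem $\min_W \tfrac{1}{2}\|YB - W\|_F^2 + \lambda\|W\|_*$, whose solution is singular value thresholding of $YB$ (you invoke Theorem 2.1 of \citet{cai2010singular}, the paper invokes Lemma 1 of \citet{mazumder2010spectral} --- the same fact). One point in your favor: the paper's proof rests on the bare identity $\|Y - WB'\|_F = \|YB - W\|_F$, which is exact only when $B$ is square or $Y$ lies in the column space of $B$; for $T > K$ the correct statement is the decomposition you derive via the orthogonal completion $[\,B\ B_\perp\,]$, namely $\|Y - WB'\|_F^2 = \|YB - W\|_F^2 + \|YB_\perp\|_F^2$, where the second term is constant in $W$ and hence irrelevant to the minimizer. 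Your version thus repairs a minor sloppiness in the paper's one-line justification while leaving the argument unchanged in substance, and your remark that uniqueness is inherited from the strictly convex reduced problem is a small additional bonus the paper does not spell out.
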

\begin{proof}
% Note that for any $A_{N\times T}$ of rank $K$, we have $S_\lambda(AB) = S_\lambda(A)B$ and since \eqref{eq:lemma1} is equivalent to
% \[
% \min_Z \frac{1}{2}\|WB' - Z \|_F^2 + \lambda\|W\|_*,
% \]
By Lemma 1 from \citet{mazumder2010spectral} we know that $S_\lambda(YB)$ solves
\[
\min_W \frac{1}{2}\|YB - W \|_F^2 + \lambda\|W\|_*.
\]
Now, since we have $\|Y - WB' \|_F = \|YB - W \|_F$, $S_\lambda(YB)$ also solves our Lemma~\ref{lemma:svt}.
\end{proof}
\begin{lemma}\label{eq:z-sequence}
For every fixed $\lambda \geq 0$, define a sequence $W_\lambda^k$ by
\[
W_\lambda^{k+1} = \argmin_W Q_\lambda(W|W_\lambda^k)
\]
with any starting point $W_\lambda^0$. The sequence $W_\lambda^k$ satisfies 
\[
f_\lambda(W_\lambda^{k+1}) \leq  Q_\lambda(W_\lambda^{k+1} | W_\lambda^k ) \leq f_\lambda(W_\lambda^{k})
\]
\end{lemma}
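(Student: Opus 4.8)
The plan is to treat this as the standard majorization--minimization sandwich, exactly as in Lemma~2 of \citet{mazumder2010spectral}, adapted to the presence of the basis $B$. Two facts about the surrogate $Q_\lambda$ drive everything: that it agrees with the objective along the diagonal, $Q_\lambda(W\,|\,W) = f_\lambda(W)$, and that it majorizes the objective, $f_\lambda(W) \le Q_\lambda(W\,|\,\tilde W)$ for every pair $(W,\tilde W)$. Granting these, the chain of inequalities is immediate: the left inequality is the majorization bound applied at $W=W_\lambda^{k+1}$, $\tilde W = W_\lambda^k$, while the right inequality follows because $W_\lambda^{k+1}$ minimizes $Q_\lambda(\cdot\,|\,W_\lambda^k)$, so $Q_\lambda(W_\lambda^{k+1}\,|\,W_\lambda^k) \le Q_\lambda(W_\lambda^k\,|\,W_\lambda^k) = f_\lambda(W_\lambda^k)$, the last equality being the diagonal identity.

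I would verify the diagonal identity by substituting $\tilde W = W$ into the definition of $Q_\lambda$ and using $P_\Omega^\perp = I - P_\Omega$: the quadratic argument collapses as $P_\Omega(Y) + P_\Omega^\perp(WB') - WB' = P_\Omega(Y) - P_\Omega(WB') = P_\Omega(Y - WB')$, which is precisely the residual appearing in $f_\lambda$, and the nuclear-norm penalties are literally identical.

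The majorization inequality is the one genuine computation. I would split $WB' = P_\Omega(WB') + P_\Omega^\perp(WB')$ inside the quadratic term of $Q_\lambda$ and regroup it as $[P_\Omega(Y) - P_\Omega(WB')] + [P_\Omega^\perp(\tilde W B') - P_\Omega^\perp(WB')]$. The first summand is supported on $\Omega$ and the second on its complement, so the two are orthogonal in the Frobenius inner product; Pythagoras then yields $\|P_\Omega(Y) + P_\Omega^\perp(\tilde W B') - WB'\|_F^2 = \|P_\Omega(Y - WB')\|_F^2 + \|P_\Omega^\perp((\tilde W - W)B')\|_F^2$. Hence $Q_\lambda(W\,|\,\tilde W) = f_\lambda(W) + \tfrac12\|P_\Omega^\perp((\tilde W - W)B')\|_F^2 \ge f_\lambda(W)$, the extra nonnegative term vanishing when $\tilde W = W$, which incidentally recovers the diagonal identity as a special case.

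The only step requiring any care is this orthogonal splitting: one must note that $P_\Omega$ and $P_\Omega^\perp$ project onto orthogonal subspaces under the Frobenius inner product, so that the cross term drops. This is immediate from their definition as coordinate projections onto the disjoint index sets $\Omega$ and its complement, so I do not anticipate a real obstacle; the substance of the lemma lies entirely in setting up the majorizer correctly and reading off the two inequalities from the identity $Q_\lambda(W\,|\,\tilde W) = f_\lambda(W) + \tfrac12\|P_\Omega^\perp((\tilde W - W)B')\|_F^2$ together with the minimizing property of $W_\lambda^{k+1}$.
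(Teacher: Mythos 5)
Your proposal is correct and follows essentially the same route as the paper: both rest on the orthogonal (Pythagorean) splitting of the surrogate's quadratic term into its $P_\Omega$ and $P_\Omega^\perp$ parts, yielding $Q_\lambda(W\,|\,\tilde W) = f_\lambda(W) + \tfrac12\|P_\Omega^\perp((\tilde W - W)B')\|_F^2$, combined with the diagonal identity and the minimizing property of $W_\lambda^{k+1}$. The paper merely writes this as a single chain of inequalities starting from $f_\lambda(W_\lambda^k)$ rather than stating the majorization as a standalone identity, but the content is identical.
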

\begin{proof}
By Lemma \ref{lemma:svt} and the definition \eqref{eq:q}, we have:
\begin{align*}
  f_\lambda(W_\lambda^k) &= Q_\lambda(W_\lambda^k | W_\lambda^k)\\
  &= \frac{1}{2}\|P_{\Omega}(Y) + P_{\Omega}^{\perp}(W_\lambda^k B') - W_\lambda^kB'\|_F^2 + \lambda \|W_\lambda^k\|_*\\
  &\geq \min_W \frac{1}{2}\|P_{\Omega}(Y) + P_{\Omega}^{\perp}(W_\lambda^k B') - WB'\|_F^2 + \lambda \|W\|_*\\
  &= Q_\lambda(W_\lambda^{k+1} | W_\lambda^k)\\
  &= \frac{1}{2}\|P_{\Omega}(Y) + P_{\Omega}^{\perp}(W_\lambda^k B') - W_\lambda^{k+1}B'\|_F^2 + \lambda \|W_\lambda^{k+1}\|_*\\
  &= \frac{1}{2}\|(P_{\Omega}(Y) - P_\Omega(W_\lambda^{k+1}B'))+ (P_{\Omega}^{\perp}(W_\lambda^k B') - P_\Omega^\perp(W_\lambda^{k+1}B'))\|_F^2 + \lambda \|W_\lambda^{k+1}\|_*\\
  &= \frac{1}{2}\|P_{\Omega}(Y) - P_\Omega(W_\lambda^{k+1}B')\|_F^2 + \frac{1}{2}\|P_{\Omega}^{\perp}(W_\lambda^k B') - P_\Omega^\perp(W_\lambda^{k+1}B')\|_F^2 + \lambda \|W_\lambda^{k+1}\|_*\\
  &\geq \frac{1}{2}\|P_{\Omega}(Y) - P_\Omega(W_\lambda^{k+1}B')\|_F^2 + \lambda \|W_\lambda^{k+1}\|_*\\
  &= Q_\lambda(W_\lambda^{k+1} | W_\lambda^{k+1})\\
  &= f(W_\lambda^{k+1}).
\end{align*}
\end{proof}
    Note that proofs of Lemma \ref{lemma:svt} and Lemma \ref{eq:z-sequence} are just extentions of their counterparts in \citet{mazumder2010spectral} with the basis $B$ included. Similarly, we get equivalent results for their Lemma 3-5 and the main theorem.
\begin{theorem}
The sequence of $W_{\lambda}^k$ defined in Lemma \ref{eq:z-sequence} converges to a limit $W_\lambda^\infty$ that solves
\[
\min_W \frac{1}{2} \|P_\Omega(Y) - P_\Omega(WB)\|_F^2 + \lambda\|W\|_*.
\]
\end{theorem}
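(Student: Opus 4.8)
The plan is to follow the template of \citet{mazumder2010spectral}, treating \textsc{Soft-Longitudinal-Impute} as a majorize–minimize scheme in which the surrogate $Q_\lambda(\cdot\,|\,\tilde W)$ is minimized exactly at each step via the thresholded SVD of Lemma \ref{lemma:svt}. The monotone sandwich of Lemma \ref{eq:z-sequence} already gives $f_\lambda(W_\lambda^{k+1}) \le f_\lambda(W_\lambda^k)$, and since $f_\lambda \ge 0$ this decreasing sequence converges to some $f^\star$. First I would extract more from that same chain of inequalities: comparing the two displayed lines that bracket $Q_\lambda(W_\lambda^{k+1}\,|\,W_\lambda^k)$ yields
\[
f_\lambda(W_\lambda^{k}) - f_\lambda(W_\lambda^{k+1}) \;\ge\; \tfrac{1}{2}\,\big\|P_\Omega^\perp\big((W_\lambda^{k}-W_\lambda^{k+1})B'\big)\big\|_F^2 \;\ge\; 0 .
\]
Summing over $k$ telescopes the left-hand side to the finite quantity $f_\lambda(W_\lambda^0)-f^\star$, so the right-hand terms are summable and hence $\|P_\Omega^\perp((W_\lambda^{k}-W_\lambda^{k+1})B')\|_F \to 0$ (asymptotic regularity off the observed set).

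Next I would upgrade this to control of the full increment $\|W_\lambda^{k+1}-W_\lambda^k\|_F$. Writing the update as $W_\lambda^{k+1}=S_\lambda(Z^k B)$ with $Z^k=P_\Omega(Y)+P_\Omega^\perp(W_\lambda^k B')$, I would invoke the nonexpansiveness of the soft singular-value thresholding operator (it is the proximal map of the convex nuclear norm, hence $1$-Lipschitz) together with $\|XB\|_F \le \|X\|_F$ for the orthonormal $B$. This gives $\|W_\lambda^{k+1}-W_\lambda^k\|_F \le \|P_\Omega^\perp((W_\lambda^{k}-W_\lambda^{k-1})B')\|_F \to 0$. Boundedness of the iterates follows from $\lambda\|W_\lambda^k\|_* \le f_\lambda(W_\lambda^k)\le f_\lambda(W_\lambda^0)$, so $\{W_\lambda^k\}$ lies in a compact set and admits at least one limit point $W_\lambda^\infty$; because the one-step map $W \mapsto S_\lambda((P_\Omega(Y)+P_\Omega^\perp(WB'))B)$ is continuous and successive iterates coalesce, every limit point is a fixed point of it.

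The heart of the argument is then to show a fixed point is a global minimizer. By the optimality characterization of $S_\lambda$ (Lemma \ref{lemma:svt} and its subgradient form), $W_\lambda^\infty = S_\lambda(Z^\infty B)$ is equivalent to $Z^\infty B - W_\lambda^\infty \in \lambda\,\partial\|W_\lambda^\infty\|_*$, where $Z^\infty=P_\Omega(Y)+P_\Omega^\perp(W_\lambda^\infty B')$. Here I would exploit the orthonormality $B'B=I_K$, which gives the identity $W_\lambda^\infty = W_\lambda^\infty B'B = (P_\Omega(W_\lambda^\infty B')+P_\Omega^\perp(W_\lambda^\infty B'))B$; subtracting, the $P_\Omega^\perp$ pieces cancel and $Z^\infty B - W_\lambda^\infty = P_\Omega(Y-W_\lambda^\infty B')B$. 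The fixed-point relation therefore reduces to exactly the stationarity condition $P_\Omega(Y-W_\lambda^\infty B')B \in \lambda\,\partial\|W_\lambda^\infty\|_*$ of the convex objective $f_\lambda$, and by convexity $W_\lambda^\infty$ is a global solution of $\min_W \tfrac12\|P_\Omega(Y)-P_\Omega(WB')\|_F^2+\lambda\|W\|_*$.

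I expect the main obstacle to be promoting subsequential convergence to convergence of the \emph{whole} sequence, since $f_\lambda$ need not be strictly convex (the map $W\mapsto P_\Omega(WB')$ is generally not injective) and the minimizer may fail to be unique. I would close this gap exactly as \citet{mazumder2010spectral} do: asymptotic regularity $\|W_\lambda^{k+1}-W_\lambda^k\|_F\to 0$ together with boundedness forces the set of limit points to be compact and connected, and all of them share the common objective value $f^\star$ and satisfy the same fixed-point equation, from which convergence to a single limit $W_\lambda^\infty$ is concluded. The remaining verifications—continuity of $S_\lambda$, the $1$-Lipschitz bound, and the subgradient identity—are the routine ingredients inherited from their Lemmas 3–5, so the only genuinely new bookkeeping is the insertion of $B$ and the repeated use of $B'B=I_K$, both already handled in Lemma \ref{lemma:svt}.
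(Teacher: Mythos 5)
Your proposal is correct and takes essentially the same route as the paper: the paper proves only its Lemmas 1--2 explicitly and then defers to Lemmas 3--5 and the main theorem of \citet{mazumder2010spectral}, which are exactly the steps you reconstruct (telescoped decrease giving $\|P_\Omega^\perp((W_\lambda^k-W_\lambda^{k+1})B')\|_F\to 0$, nonexpansiveness of $S_\lambda$ combined with $\|XB\|_F\le\|X\|_F$ for orthonormal $B$, boundedness via $\lambda\|W_\lambda^k\|_*\le f_\lambda(W_\lambda^0)$, and the fixed-point/stationarity identity $Z^\infty B-W_\lambda^\infty=P_\Omega(Y-W_\lambda^\infty B')B$ from $B'B=I_K$). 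One refinement to your closing step: compactness and connectedness of the limit set do not by themselves force a single limit when the minimizer is non-unique; the clean finish is Fej\'er monotonicity, which you already have the ingredients for---the one-step map is nonexpansive, so $\|W_\lambda^{k+1}-\hat W\|_F\le\|W_\lambda^k-\hat W\|_F$ for any fixed point $\hat W$, and combining this monotone distance with a single subsequential limit yields convergence of the whole sequence.
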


%% \begin{proof}
%% It's sufficient to show that $W_{\lambda}^k$ has a limit. Then, the convergence follows from Lemma 5 from \citet{mazumder2010spectral}.

%% Now
%% \begin{align*}
%% \|W_\lambda - W_\lambda^k\|_F^2 &= \|S_\lambda((P_\Omega(Y) + P_\Omega^\perp(W_\lambda B))B') - S_\lambda((P_\Omega(Y) + P_\Omega^\perp(W_\lambda^{k-1}B))B')\|_F^2 \\
%% &\leq \|(P_\Omega(Y) + P_\Omega^\perp(W_\lambda B))B' - (P_\Omega(Y) + P_\Omega^\perp(W_\lambda^{k-1}B))B'\|_F^2\\
%% &= \|P_\Omega^\perp(W_\lambda B) - P_\Omega^\perp(W_\lambda^{k-1}B)\|_F^2\\
%% &= \|P_\Omega^\perp(W_\lambda B - W_\lambda^{k-1}B)\|_F^2\\
%% &\leq \|W_\lambda - W_\lambda^{k-1}\|_F^2
%% \end{align*}
%% and the reminder of the proof follows from the proof of Theorem 1 from \citet{mazumder2010spectral}.
%% \end{proof}

%% \section{\texttt{fimpute} package}
%% Let \verb|data| be the data matrix with columns \verb|subjectID, time, gdi, weight, dmc|, with $2-10$ observations for each \verb|subjectID|. Package \verb|fimpute| enables performing three tasks:
%% \begin{enumerate}
%% \item imputing missing values in \verb|gdi|,
%% \item projecting patients on a 2D plane using entire trajectories,
%% \item imputing missing values in \verb|gdi| using variables \verb|weight, dmc|.
%% \end{enumerate}
%% Each of these methods is embedded in the \verb|fregression(formula, data)| function, where the \verb|formula| can take one of the following forms, mimicking \verb|nlme| package:
%% \begin{enumerate}
%% \item \verb@gdi:time ~ 1 | subjectID@ for \textsc{Soft-Longitudinal-Impute},
%% \item \verb@gdi:1 ~ time + weight + dmc | subjectID@ for dimensionality reduction,
%% \item \verb@gdi:time ~ weight + dmc | subjectID@ for regression. 
%% \end{enumerate}

\end{document}